\documentclass[12pt,a4paper]{article}

\usepackage[margin=1in]{geometry} %
\usepackage[utf8]{inputenc} %
\usepackage{authblk} %
\usepackage[colorlinks=True, citecolor=red, linkcolor=blue]{hyperref} %
\usepackage[numbers,sort]{natbib} %
\usepackage{color}
\usepackage{array,multirow}

\usepackage{enumitem}
\setlist{itemsep=0pt}

\usepackage{amssymb,amsmath,amsthm}
\usepackage{mathtools}
\usepackage{graphicx}
\usepackage{subcaption}
\usepackage[ruled,linesnumbered]{algorithm2e}
\usepackage[capitalise]{cleveref}



\theoremstyle{definition}
\newtheorem{theorem}{Theorem}[section]
\newtheorem{proposition}{Proposition}

\theoremstyle{remark}
\newtheorem{remark}[theorem]{Remark}

\newcommand{\R}{\mathbb{R}}
\newcommand{\E}{\mathbb{E}}

\DeclareMathOperator{\trace}{tr}

\DeclareMathOperator{\Var}{Var}
\DeclareMathOperator{\Cov}{Cov}

\renewcommand{\d}{\mathrm{d}}

\usepackage{xargs}                      %
\usepackage[pdftex,dvipsnames]{xcolor}  %
\usepackage[colorinlistoftodos,prependcaption,textsize=tiny,textwidth=0.85in]{todonotes}

\usepackage[normalem]{ulem} %

\begin{document}

\title{Gradient-free Riemannian Langevin Sampler}

\author{
Ricardo Baptista\footnote{Department of Statistical Sciences, University of Toronto, Canada, \url{r.baptista@utoronto.ca}},
Olivier Zahm\footnote{UGA, Inria, CNRS, Grenoble INP*, LJK, 38000 Grenoble, France, \url{olivier.zahm@inria.fr}}
}

\maketitle

\begin{abstract}

We address the problem of efficiently sampling multimodal probability distributions, where standard Markov Chain Monte Carlo methods often suffer from poor mixing and mode trapping. To mitigate these issues, we propose Gradient-free Riemannian Langevin Sampler (GRiLS), a novel proposal that improves exploration without requiring gradient evaluations of the target density.
Our approach introduces a Riemannian metric which reshapes the local geometry in order to facilitate transitions across modes.
The resulting gradient-free MCMC algorithm is particularly suitable for complex, computationally expensive targets where derivatives are unavailable or impractical.
The GRiLS proposal requires knowing the mean and covariance of the target density, which we estimate using an ensemble of interacting particles.
Empirical results on multimodal benchmarks demonstrate that GRiLS achieves improved mixing compared to existing gradient-based and gradient-free MCMC approaches.
\end{abstract}

\paragraph{Keywords:}
Gradient-free MCMC,
Riemannian Langevin dynamics,
Lamperti transform,
Interacting particle system.

\section{Introduction}

The accurate and efficient sampling of high-dimensional probability distributions is a central task in computational statistics. For sampling distributions on $\R^d$ whose density $\mu$ is known up to a normalization constant, Markov Chain Monte Carlo (MCMC) remains a predominant method~\cite{gelman1995bayesian, robert1999monte}.
Given a proposal density $q(\cdot|\cdot)$, MCMC algorithms build a Markov chain $\{x_1,x_2,\hdots\}$ by drawing a proposal candidate $x^\dagger \sim q(\cdot|  x_k )$ and directly accepting this as the next state $x_{k+1} = x^\dagger$ (unadjusted algorithms) or rejecting it with a certain probability and setting $x_{k+1} = x_k$ (Metropolis adjusted algorithms).
MCMC offers strong theoretical guarantees, especially when the target density $\mu$ is log-concave~\cite{durmus2017nonasymptotic, chewi2023log, dwivedi2019log}. However, one major difficulty in practical applications is when the target is not log-concave, e.g., under multimodality. In such scenarios, the Markov chain may be trapped in a mode, leading to poor mixing, high autocorrelation, and large bias. The computational efficiency of various samplers, often quantified by the time required to traverse energy barriers between modes, deteriorates exponentially as the modes become more separated~\cite{holley1989asymptotics}.

To overcome these mixing challenges, practitioners have explored a variety of techniques, including tempered MCMC (also known as Parallel Tempering~\cite{neal1996sampling}), transport map accelerated MCMC~\cite{parno2018transport,cui2023scalable}, adaptive biasing force methods~\cite{comer2015adaptive,stoltz2010free}, dimension reduction techniques~\cite{zahm2022certified,li2025sharp} and preconditioned Langevin dynamics~\cite{girolami2011riemann,xifara2014langevin,cui2024optimal,lelievre2025optimizing} to name just a few.
We focus here on the latter approach, which defines the proposal density %
via a time discretization of a Langevin dynamic that has been preconditioned in order to improve the convergence of the continuous-time dynamic toward equilibrium. This preconditioning is achieved by equipping $\R^d$ with a suitable Riemannian metric that locally reduces the geodesic distances between modes, thereby facilitating the transitions of particles across different regions of high probability.
Given an arbitrary field of symmetric definite matrices $W\colon x\mapsto W(x)\in\R^{d\times d}$, we endow $\R^d$ with the Riemannian metric $\langle u,v\rangle_x = u^\top W(x)^{-1} v$. The Riemannian Langevin dynamic is given by 
\begin{equation}\label{eq:LangevinPrecond_intro}
 \d X_t = \big( \mathrm{div}W(X_t) + W(X_t)\nabla \ln\mu(X_t) \big)\d t + \sqrt{2W(X_t)} \d B_t ,
\end{equation}
where $\mathrm{div}W(x)=(\sum_{j=1}^d \partial_j W_{i,j}(x))_{1\leq i \leq d}$ is the divergence of $W$ and $(B_t)_{t\geq0}$ is the standard Brownian motion in $\R^d$, see \emph{e.g.}~\cite{kent1978time,chung2013lectures}.
Popular choices for $W$
include the constant metric~\cite{hird2026non,haario2001adaptive,titsias2023optimal}, the inverse negative Hessian of the log-density~\cite{girolami2011riemann}, or a suitable positive-definite approximation of it when the target is not strongly log-concave~\cite{betancourt2013general,cao2025derivative}. More generally, the Hessian of any arbitrary strongly convex function may be used, giving rise to the so-called mirrored Langevin dynamics \cite{hsieh2018mirrored,zhang2020wasserstein}. It was proven in \cite{cui2024optimal} that an \emph{optimal} Riemannian metric exists (optimal in the sense of the fastest convergence of $X_t$ towards equilibrium) and can be expressed as the Hessian of a strongly convex function; however, its exact computation remains intractable in general.

In this work, we propose a gradient-free Riemannian Langevin sampler for targeting multimodal densities. To define the metric, we choose
\begin{equation}\label{eq:Wmu_Intro}
 W(x)=\frac{\nu(x)}{\mu(x)}\Sigma  ,
\end{equation}
where $\nu=\mathcal{N}(m,\Sigma)$ is the Gaussian density with same mean $m\in\R^d$ and covariance $\Sigma\in\R^{d\times d}$ as $\mu$.
Although it might not be optimal in the sense of \cite{cui2024optimal}, we show that this choice can greatly improve the convergence to equilibrium as compared to other choices, provided that $\nu$ ``covers'' the modes of $\mu$ by satisfying $\sup(\frac{\mu}{\nu})<\infty$. This choice has two key advantages. First, if a particle $x$ lies between two modes of $\mu$, we have $W(x)^{-1}\approx 0$ which reduces the geodesic distances between the modes. Second, the choice \eqref{eq:Wmu_Intro} leads to a \emph{gradient-free Langevin dynamic} which does not involve the term $\nabla\ln\mu(X_t)$ in~\eqref{eq:LangevinPrecond_intro}. This offers a significant advantage when gradients of the target densities are computationally prohibitive or intractable to evaluate.
Such gradient-free dynamics are receiving growing attention, see \cite{cui2024optimal} when $W$ is the optimal metric,~\cite{engquist2024adaptive, engquist2024sampling} for densities defined on the torus, and~\cite{kutri2026fast,garbuno2020affine,chakraborty2025affine} in the setting of ensemble-based Langevin samplers.

To define the proposal density based on the Riemannian Langevin dynamics \eqref{eq:LangevinPrecond_intro} with \eqref{eq:Wmu_Intro}, we will consider a Lamperti transform~\cite{lamperti1972semi} and an appropriate time discretization with stepsize $\Delta t > 0$. The resulting proposal sample $x^\dagger\sim q(\cdot | \theta_k)$ is given by:
\begin{align}\label{eq:Proposal_intro}
 x^\dagger &= \theta_k x_k + (1-\theta_k) m  + \sqrt{1-\theta_k^2} \, \xi_k ,
 \text{ where }
  \left\{
 \begin{array}{rl}
  \xi_k&\sim\mathcal{N}(0,\Sigma), \\
  \theta_k &= \exp\left(-\Delta t \frac{\nu(x_k)}{\mu(x_k)} \right), \\
  \nu(x) &\propto \exp\left(-\frac{1}{2}\|x - m\|_{\Sigma^{-1}}^2\right)  .
 \end{array}
 \right.
\end{align}
We denote the MCMC algorithm based on this proposal by Gradient-free Riemannian Langevin Sampler (GRiLS). For the proposal in~\eqref{eq:Proposal_intro}, the target density $\mu$ can be known up to a normalizing constant, since the latter can be absorbed in the stepsize $\Delta t$. Interestingly, letting $\theta_k = \theta_0 \in(0,1)$ be a constant parameter recovers the preconditioned Crank-Nicolson proposal (pCN)~\cite{cotter2013mcmc}. In this sense, the proposal in \eqref{eq:Proposal_intro} can be interpreted as an extension of the pCN scheme, which is consistent with the Langevin dynamics in~\eqref{eq:LangevinPrecond_intro} as $\Delta t\rightarrow0$.

In practice, we approximate the mean $m$ and covariance $\Sigma$ in~\eqref{eq:Proposal_intro} using approximate samples from $\mu$. These samples can either be obtained from previous MCMC iterations $\{x_1,\hdots,x_k\}$, following adaptive MCMC strategies~\cite{andrieu2008tutorial, haario2001adaptive,hird2026non}, or from an ensemble of $N\geq1$ particles $\{x_k^{1},\hdots,x_k^{N}\}$ at step $k$, as in more recent approaches~\cite{sprungk2025metropolis, garbuno2020interacting,carrillo2022consensus,leimkuhler2018ensemble}.
We focus on the ensemble-based approach and introduce a block-ensemble version of GRiLS (BE-GRiLS) in order to  improve the computational efficiency. Given a partition $\cup_{\ell=1}^P \mathcal{B}^{\ell} = \{1,\hdots,N\}$, each particle $x_k^{i}$ within a block $i\in\mathcal{B}^\ell$ is updated using the sample mean and covariance computed from all other blocks, that is:
\begin{equation}\label{eq:EnsembleMeanCov_intro}
 m^\ell_k = \frac{1}{N-\#\mathcal{B}^\ell}  \sum_{i\notin \mathcal{B}^\ell}  x_k^{i}
 \quad\text{and}\quad
 \Sigma^\ell_k =\frac{1}{N-\#\mathcal{B}^\ell}   \left( \sum_{i\notin \mathcal{B}^\ell}   (x_k^{i}) (x_k^{i})^\top \right) - (m^\ell_k)(m^\ell_k)^\top .
\end{equation}
As pointed out in \cite{sprungk2025metropolis}, this strategy enables the parallel update of all particles within a block, thereby improving the computational efficiency of the algorithm.
It is worth mentioning that the resulting sampler BE-GRiLS is similar to the Consensus-based sampler (CBS)~\cite{carrillo2022consensus} in several aspects, as they both use the ensemble mean and covariance to update each particle as in~\eqref{eq:Proposal_intro}.
The main difference is that, in CBS, the parameter $\theta_k=\theta_0$ is taken to be constant during the iterations and hence, as for pCN, CBS is not consistent with a Langevin dynamic targeting $\mu$ when $\Delta t \rightarrow 0$. Moreover, instead of using a Metropolis correction, CBS employs importance sampling to estimate the mean and covariance of $\mu$.

The rest of the paper is structured as follow.
In Section~\ref{sec:RiemannianLangevin} we motivate the choice of metric \eqref{eq:Wmu_Intro} for preconditioning the Riemannian Langevin dynamic \eqref{eq:LangevinPrecond_intro}.
In Section~\ref{sec:TimeIntegration_MCMCproposal} we propose a time-integration scheme which leads to the proposal in~\eqref{eq:Proposal_intro}.
In Section \ref{sec:SpectralAnalysis}, a spectral analysis of MCMC algorithms relates the convergence rate of MCMC algorithms with the one of the continuous Langevin dynamic~\eqref{eq:LangevinPrecond_intro}. Finally, in Sections~\ref{sec:NumericalResults} and~\ref{sec:HigherDimensional} we illustrate the performance of the resulting MCMC algorithm on several one-dimensional and multivariate benchmark problems.

\section{Riemannian Langevin Dynamics}\label{sec:RiemannianLangevin}

Let $\mu$ be a probability density on $\R^d$ defined by
$$
 \mu(x)\propto\exp(-V(x)),
$$
where $V:\R^d\rightarrow\R$ is a smooth potential function. A classic approach for sampling $\mu$ is to consider the overdamped Langevin dynamics
\begin{equation}\label{eq:Langevin}
 \d X_t = -\nabla V(X_t)\d t + \sqrt{2} \d B_t,
\end{equation}
where $B_t$ is the $d$-dimensional Brownian motion.
Under standard regularity assumptions on $V$, the invariant density of this dynamic is $\mu$, i.e., $X_t$ converges in law to $\mu$ as $t \rightarrow \infty$.
An alternative dynamic to \eqref{eq:Langevin} that does not change the invariant density is %
\begin{equation}\label{eq:LangevinPrecond}
 \d X_t = \big( \mathrm{div}W(X_t) - W(X_t)\nabla V(X_t) \big)\d t + \sqrt{2W(X_t)} \d B_t ,
\end{equation}
where $W:\R^d\rightarrow\mathcal{S}_{+}^d$ is any smooth field taking value in $\mathcal{S}_+^d\subset\R^{d\times d}$, the set of symmetric semi-definite positive matrices, see~\cite{kent1978time,chung2013lectures}.
Here, $\mathrm{div}W = (\sum_{j=1}^d \partial_j W_{i,j})_{1\leq i\leq d}$ is the divergence of $W$ and $\sqrt{W}$ is any square root of $W$ such that $\sqrt{W}(x)\sqrt{W}(x)^\top  = W(x)$. This dynamic corresponds to a Langevin diffusion process on the Riemannian manifold $\R^d$ endowed with the metric induced by $W^{-1}$, meaning $\langle u,v\rangle_X := u^\top W(X)^{-1} v$ for any $v,u\in \R^d$.
Over the past two decades, several choices have been proposed for $W$, ranging from constant matrix fields to location-dependent fields; see Table~\ref{tab:W} for a non-exhaustive overview of possible metrics.

\begin{table} \footnotesize
\hspace{-1cm}
\begin{tabular}{|p{0.01\textwidth}|>{\arraybackslash}p{0.33\textwidth}|>{\arraybackslash}p{0.65\textwidth}|}
\cline{2-3}
\multicolumn{1}{@{}p{0.05\textwidth}|}{} & \textbf{(inverse) Metric} & \textbf{Comments} \\ \hline
\multicolumn{1}{|@{}p{0.0\textwidth}|}{\multirow{3}{*}{\rotatebox{90}{constant}}}
& $W(x) = I_d$ & Standard Langevin dynamic \eqref{eq:Langevin} \\
& $W(x) = \Cov_\mu$ & Covariance-based preconditionning \cite{haario2001adaptive}, see \cite{garbuno2020affine} for an ensemble-based covariance estimation. \\
& $W(x) = \E_\mu [ \nabla\ln \mu \, \nabla\ln\mu^\top ]^{-1}$ & Inverse of Fisher matrix of $\mu$ \cite{titsias2023optimal} \\[3pt]
\hline
\multicolumn{1}{|@{}p{0.0\textwidth}|}{\multirow{3}{*}{\rotatebox{90}{$x$-dependent\hspace{1.2cm}}}}
&
$W(x) = ( \mathcal{I}(x) + \Sigma_0^{-1} )^{-1}$
& When $\d\mu(x)\propto\mathcal{L}^{y}(x)\d\mu_0(x)$ where $\mu_0=\mathcal{N}(0,\Sigma_0)$, use the Fisher information matrix $\mathcal{I}(x)=\E_{Y|x} [\nabla \ln\mathcal{L}^Y(x)\nabla \ln\mathcal{L}^Y(x)^\top ]$ of the likelihood $\mathcal{L}^y(x)$  \cite{girolami2011riemann}, see also \cite{kleppe2024log} for generalization to a class of latent variable models, and \cite{beskos2017geometric,cao2025derivative} for efficient approximate computation in high-dimension.
\\
&$W(x) = \mathrm{SoftAbs}[- \mathrm{Hess\,}\ln\mu(x)] $& Soft-absolute value of local Hessian of the log-density \cite{betancourt2013general} \\
& $W(x)=(\mathrm{Hess}\,\psi(x))^{-1}$
& Mirrored Langevin dynamic, for ad-hoc strictly convex $\psi$ \cite{hsieh2018mirrored,zhang2020wasserstein} \\
& $W(x) = \mu(x)^{-1}I_d$ & Derivative-free dynamics (for $\mu$ defined on the torus $\mathbb{T}^d$) \cite{engquist2024sampling,lelievre2025optimizing} \\
& $W \in \arg\min_{W:x\mapsto W(x)} C(\mu,W)$ & Optimizing the Poincar\'{e} constant  \cite{cui2024optimal,lelievre2025optimizing} \\[2pt] \cline{2-3}
& $W(x) = \frac{\nu(x)}{\mu(x)} \Cov_\mu$ & For $\nu= \mathcal{N}(m,\Sigma)$ with $m=\E_\mu[X]$ and $\Sigma = \Cov_\mu$ \textbf{(present paper)} \\
\hline
\end{tabular}
\caption{Possible choices of metric $W$}
\label{tab:W}
\end{table}

\subsection{Optimal Metric}

The idea proposed in~\cite{cui2024optimal, lelievre2025optimizing} is to identify $W$ %
by optimizing the convergence rate of the preconditioned dynamics in~\eqref{eq:LangevinPrecond}. Denoting the density of $X_t$ that solves~\eqref{eq:LangevinPrecond} by $\mu_t$, the chi-square divergence $\chi^2( \mu_t || \mu ):=\Var_\mu( \mu_t/\mu )$ satisfies
\begin{equation} \label{eq:Chisq_convergence}
 \chi^2( \mu_t || \mu ) \leq e^{-2t/C(\mu,W)} \chi^2( \mu_0 || \mu ),
\end{equation}
for any $t\geq0$ and any initial condition $\mu_0 \gg \mu$. Here, $C(\mu,W)\geq0$ denotes the Poincar\'{e} constant, which is defined as the smallest constant such that the Riemannian Poincar\'{e} inequality
\begin{equation} \label{eq:PI}
 \Var_\mu(f) \leq C(\mu,W) \,
 \int \| \nabla f(x)\|_{W(x)}^2 \mu(x)\d x ,
\end{equation}
holds for any smooth function $f:\R^d\rightarrow\R$; see~\cite{bakry2013analysis} for more details. Here we use the notation $\|v \|_A^2=v^\top A v$.
It turns out that $C(\mu,W)$ is also the smallest constant so that the inequality~\eqref{eq:Chisq_convergence} holds for any $t\geq0$ and any $\mu_0\gg\mu$, therefore it characterizes exactly the exponential convergence of the dynamic \eqref{eq:LangevinPrecond}.
Based on this observation, \cite{cui2024optimal,lelievre2025optimizing} propose to identify the matrix field $W$ which yields the best convergence rate by solving
\begin{equation}\label{eq:minC}
 \min_{\substack{ W:\R^d\rightarrow \mathcal{S}_+^d \\ \E_{\mu}[\trace(W)] = \trace(\Cov_\mu) }} C(\mu,W) .
\end{equation}
The constraint $\E_{\mu}[\trace(W)] = \trace(\Cov_\mu)$ is introduced in \cite{cui2024optimal} to fix the scaling of $W$ in order to prevent a trivial solution arising from $C(\mu,\alpha W) = \frac{1}{\alpha} C(\mu,W) \rightarrow 0$ as $\alpha\rightarrow\infty$.
Here, $\Cov_\mu=\int (x-m)(x-m)^\top \mu(x)\d x$ and $m=\int x\mu(x)\d x$ are the covariance and the mean of $\mu$.
While \cite{lelievre2025optimizing} considers alternative normalization constraints, the choice $\E_{\mu}[\trace(W)] = \trace(\Cov_\mu)$ offers significant advantages. First, by testing the Poincar\'{e} inequality \eqref{eq:PI} with affine functions $f$, this constraint yields the lower bound
\begin{equation}\label{eq:Cgeq1}
 C(\mu,W)\geq1 .
\end{equation}
Second, under some assumptions on $\mu$, it is shown in \cite[Section 2]{cui2024optimal} that an optimal solution $W_\mu^\text{opt}$ to \eqref{eq:minC} exists with $C(\mu,W_\mu^\text{opt})=1$. This optimal field $W_\mu^\text{opt}$ is given by
\begin{equation}\label{eq:Woptimal}
 W_\mu^\text{opt}(x) = \mathrm{Hess}\,\varphi^\star(x)^{-1},
\end{equation}
where $\varphi^\star:\text{supp}(\mu)\rightarrow\R$ is the strictly convex and smooth function corresponding to the convex conjugate of the \emph{moment map} $\varphi$ of $\mu$. We refer to~\cite{cordero2015moment} for more details on moment maps.
Because $W_\mu^\text{opt}$ is the inverse Hessian of a strictly convex function, the resulting dynamic \eqref{eq:LangevinPrecond} is a Mirrored Langevin dynamic \cite{hsieh2018mirrored,zhang2020wasserstein} of the form $X_t= \nabla \psi^\star(Y_t)$ with $\d Y_t = -\nabla V(X_t)\d t +\sqrt{2 \mathrm{Hess}\,\psi(x)} \d B_t$, with an \emph{optimal choice} for the mirror map $\psi=\varphi^\star$.
Third, \cite[Theorem 3.2.]{cui2024optimal} shows that $W_\mu^\text{opt}$ is actually a \emph{Stein Kernel}\footnote{A Stein Kernel for $\mu$ is any matrix field $W:\R^d\rightarrow\mathcal{S}_+^d$ such that $\int (x-m)f\d\mu = \int W\nabla f\d\mu $ for all smooth function $f:\R^d\rightarrow\R$, where $m=\int x\d\mu$.}, and therefore the dynamic \eqref{eq:LangevinPrecond} can be shown to further simplify as
$$
 \d X_t = -(X_t-m) \d t + \sqrt{2W_\mu^\text{opt}(X_t)} \d B_t ,
$$
where we recall that $m$ %
is the mean of $\mu$. Remarkably, this dynamic is gradient-free in the sense that it no longer involves the gradient of the potential $V$ of $\mu\propto\exp(-V)$.

Computing the optimal metric $W_\mu^\text{opt}$ (or the associated moment map $\varphi$) for a general measure $\mu$, however, is a difficult task. While 
\cite{cui2024optimal} proposes a gradient-descent algorithm for solving \eqref{eq:minC}, computing the gradient of $W\mapsto C(\mu,W)$ requires the eigendecomposition of a diffusion operator on $\R^d$, which is not computationally tractable for dimensions $d\geq3$. Nonetheless, there are two cases where $W_\mu^\text{opt}$ can be computed in closed form:
\begin{itemize}
 \item For $d=1$, if $\mu$ is supported on a convex domain, Proposition 2 in \cite{cui2024optimal} shows
 \begin{equation}\label{eq:Wstar_d1}
  W_\mu^\text{opt}(x) = \frac{1}{\mu(x)} \int_x^\infty (t-m) \mu(t)\d t,
  \qquad m=\int x\mu(x)\d x.
 \end{equation}
 See Figure \ref{fig:Wopt_VS_WGRiLS} for the representation of $W_\mu^\text{opt}$ when $\mu$ is a mixture of two Gaussian.
 This object has received growing attention in the probability literature~\cite{saumard2019weighted,germain2023note,ernst2020first} and in sensitivity analysis~\cite{heredia2025one,song2019derivative,roustant2025gradient,heredia2026weighted}.

 \item For a probability density $\mu$ defined on the torus $\mathbb{T}^d = (\R/\mathbb{Z})^d$, i.e., $\mu(x+k)=\mu(x)$ for all $x\in\R^d$ and $k\in \mathbb{Z}^d$, the optimal metric cannot be expressed as in \eqref{eq:Woptimal}. In fact, there is no moment map $\varphi$ for such $\mu$ because convex functions on $\mathbb{T}^d$ are necessarily constant. %
 Nonetheless,~\cite[Proposition 10]{lelievre2025optimizing} states that
  \begin{equation}\label{eq:Wstar_hom}
    W^{\text{opt,hom}}_{\mu}(x) = \frac{1}{\mu(x)} I_d,
  \end{equation}
  solves \eqref{eq:minC} in the \emph{homogenized limit}. That is, when replacing $\mu$ and $W$ in \eqref{eq:minC} with $\mu_{\sharp,k}(x)=\mu(kx)$ and $W_{\sharp,k}(x)=W(kx)$, respectively, and letting $k\in\mathbb{N}$ tend to $+\infty$ yields this solution; see \cite{allaire2002shape} for more details. The metric $W^{\text{opt,hom}}_{\mu}$ is used in~\cite{engquist2024sampling} to derive a gradient-free Langevin dynamic $\d X_t = \sqrt{2/\mu(X_t)}\d B_t$ from~\eqref{eq:LangevinPrecond}.

\end{itemize}

The two analytical solution~\eqref{eq:Wstar_d1} and~\eqref{eq:Wstar_hom} will guide our construction of a suboptimal, but computationally practical, metric in the next subsection.

\subsection{Gradient-Free Dynamics}

\begin{figure}
    \centering
    \includegraphics[width=0.6\textwidth]{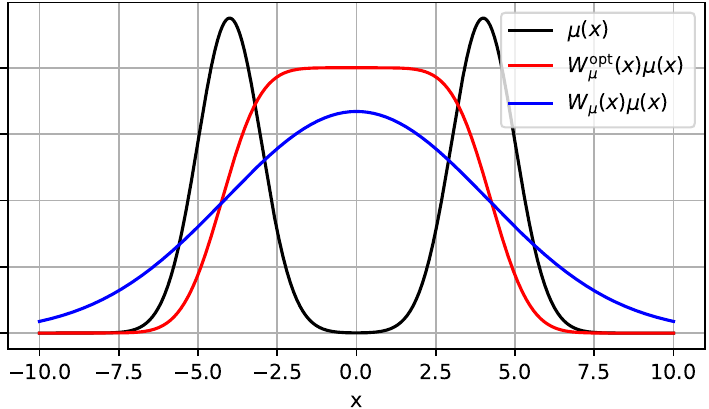}
    \caption{Optimal metric $W^\text{opt}_\mu$ as in \eqref{eq:Wstar_d1} (red curve) for the Gaussian mixture $\mu=\tfrac{1}{2}\mathcal{N}(-4,1)+\tfrac{1}{2}\mathcal{N}(+4,1)$ (bashed black curve).
    The blue curve represents the sub-optimal metric $W_\mu$ as in \eqref{eq:Wmu}.
    For better visualisation, the metrics are multiplied by $\mu$.
    }
    \label{fig:Wopt_VS_WGRiLS}
\end{figure}

For simplicity, we assume the probability density $\mu$ is fully supported on $\R^d$, so that $\mu(x)>0$ for all $x\in\R^d$.
With $\mu(x)\propto e^{-V(x)}$, the drift term in \eqref{eq:LangevinPrecond} can then be written as
\begin{equation}\label{eq:LangevinPrecond_drift}
  \mathrm{div}W(x) - W(x)\nabla V(x)
  = \frac{1}{\mu(x)} \mathrm{div} ( W(x) \mu(x) ).
\end{equation}
Thus, any field of the form $W(x)=A(x)/\mu(x)$ for some field $A:\R^d\rightarrow\mathcal{S}_+^d$ results in a Langevin dynamic~\eqref{eq:LangevinPrecond} in which the gradient of $V$ does not appear: only the divergence of $A$ is required.
This is the case for the optimal metrics \eqref{eq:Wstar_d1} and \eqref{eq:Wstar_hom} mentioned earlier.

We now construct a simple and tractable field $A$ that yields a suitable preconditioner for the Langevin dynamic.
Based on Figure \ref{fig:Wopt_VS_WGRiLS}, two important features of $W_\mu^\text{opt}$ are that $W_\mu^\text{opt}(x)\mu(x)$ is unimodal and that it encompass the support of $\mu$.
Hence, we propose
\begin{equation}\label{eq:Wmu}
  W_\mu(x) =
  \frac{\nu(x)}{\mu(x)}
  \Sigma,
  \quad\text{where}\quad
  \left\{\begin{array}{rl}
          \nu(x) &= \frac{\exp\left(-\frac{1}{2}\|x-m\|^2_{\Sigma^{-1}}\right)}{\sqrt{2\pi\det{\Sigma}}},\\
          m &= \E_{\mu}[X] ,\\
          \Sigma &= \Cov_\mu .
        \end{array}
 \right.
\end{equation}
Figure~\ref{fig:Wopt_VS_WGRiLS} presents $W_\mu$ and the optimal $W_\mu^\text{opt}$ given by \eqref{eq:Wmu} and \eqref{eq:Wstar_d1}, respectively.
While the choice in~\eqref{eq:Wmu} %
 might not be optimal with respect to optimization problem~\eqref{eq:minC}, it is a natural and convenient choice in practice.
In addition, setting $\Sigma=\Cov_\mu$ ensures that $W_\mu$ satisfies the normalization constraint of \eqref{eq:minC} by construction, meaning $\E_\mu[\trace(W_\mu)]=\E_\nu[\trace( \Sigma )]=\trace(\Cov_\mu)$.
Combining \eqref{eq:LangevinPrecond_drift} with \eqref{eq:Wmu}, the preconditioned dynamic in~\eqref{eq:LangevinPrecond} simplifies to
\begin{equation}\label{eq:LangevinGaussian}
 \d X_t = - \frac{\nu(X_t)}{\mu(X_t)} (X_t-m) \d t   +  \sqrt{2\frac{\nu(X_t)}{\mu(X_t)} \Sigma} \d B_t .
\end{equation}
This Langevin dynamic does not involve the gradient of the potential $V$, %
while still admitting $\mu$ as its invariant measure by construction.
The following proposition provides a simple upper bound for the Poincar\'{e} constant $C(\mu,W_\mu)$.

\begin{proposition}\label{prop:boundC}
 Let $\mu$ be a probability density that is fully supported on $\R^d$, and let $W_\mu$ take the form in~\eqref{eq:Wmu}. Then
 \begin{equation}\label{eq:boundC}
   1\leq  C\left(\mu,  W_\mu \right) \leq  \sup_{x\in\R^d}\frac{\mu(x)}{\nu(x)}.
 \end{equation}
\end{proposition}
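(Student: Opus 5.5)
The lower bound $C(\mu,W_\mu)\geq 1$ is immediate from \eqref{eq:Cgeq1}, since $W_\mu$ satisfies the normalization $\E_\mu[\trace(W_\mu)]=\trace(\Cov_\mu)$. For completeness I would reprove it by testing \eqref{eq:PI} with an affine $f(x)=a^\top x$. Then $\Var_\mu(f)=a^\top\Sigma a$, while
\[
\int \|\nabla f\|_{W_\mu}^2\,\mu\,\d x=\int a^\top\Sigma a\,\frac{\nu}{\mu}\,\mu\,\d x=a^\top\Sigma a .
\]
Hence $C(\mu,W_\mu)\geq 1$.

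For the upper bound, the key observation is that
\[
\|\nabla f(x)\|^2_{W_\mu(x)}\,\mu(x)=\nabla f(x)^\top\Sigma\nabla f(x)\,\nu(x),
\]
so the Dirichlet form of $W_\mu$ under $\mu$ equals the Dirichlet form of the constant metric $\Sigma$ under the Gaussian $\nu$. I would then chain three inequalities. First, the variance is a minimum of squared deviations:
\[
\Var_\mu(f)=\min_{c\in\R}\int (f-c)^2\mu\,\d x\leq \int (f-\E_\nu f)^2\,\mu\,\d x .
\]
Second, with $M:=\sup_x \mu(x)/\nu(x)$, the pointwise bound $\mu\leq M\nu$ gives
\[
\int (f-\E_\nu f)^2\,\mu\,\d x\leq M\,\Var_\nu(f).
\]
Third, the Gaussian Poincaré (Brascamp--Lieb) inequality for $\nu=\mathcal{N}(m,\Sigma)$ gives
\[
\Var_\nu(f)\leq \int \nabla f^\top\Sigma\nabla f\,\nu\,\d x,
\]
and by the identity above the right-hand side equals $\int \|\nabla f\|_{W_\mu}^2\mu\,\d x$. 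Combining the three yields $\Var_\mu(f)\leq M\int\|\nabla f\|^2_{W_\mu}\mu\,\d x$ for all smooth $f$, i.e.\ $C(\mu,W_\mu)\leq M$.

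The only step needing care is the Gaussian Poincaré inequality with the sharp constant for the covariance metric. I would obtain it from the standard Gaussian case via the change of variables $x=m+\Sigma^{1/2}z$: with $g(z)=f(m+\Sigma^{1/2}z)$ we have $\nabla g=\Sigma^{1/2}\nabla f$, so $\|\nabla g\|^2=\nabla f^\top\Sigma\nabla f$, and the standard Gaussian Poincaré inequality (constant $1$) transfers directly.

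There are two minor technical points. One is integrability: if $\int \nabla f^\top\Sigma\nabla f\,\nu\,\d x=\infty$ there is nothing to prove, and otherwise $f\in L^2(\nu)\subset L^2(\mu)$ because $\mu\leq M\nu$. The other is the case $M=\infty$, where the bound is trivial. Positivity of $\mu$ is used only so that $W_\mu$ is well defined.
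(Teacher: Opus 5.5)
Your proof is correct and follows the paper's argument essentially step by step. The upper bound is obtained exactly as you propose, via $\Var_\mu(f)\le(\sup\tfrac{\mu}{\nu})\Var_\nu(f)$, the Gaussian Poincar\'{e} inequality, and the identity $\E_\nu[\|\nabla f\|_\Sigma^2]=\E_\mu[\|\nabla f\|_{W_\mu}^2]$; the lower bound is the affine-test-function argument, which the paper only cites through \eqref{eq:Cgeq1} and you spell out explicitly.
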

\begin{proof}
The left inequality $1\leq  C\left(\mu,  W_\mu \right)$ is a direct consequence of the normalization $\E_\mu[\trace(W_\mu)]=\trace(\Cov_\mu)$; see \eqref{eq:Cgeq1}.
For the upper bound, we use that
$$\Var_\mu(f) = \min_{\alpha\in\R} \int (f-\alpha)^2  \d\mu \leq (\sup\tfrac{\mu}{\nu}) \min_{\alpha\in\R}\int (f-\alpha)^2  \d\nu = (\sup\tfrac{\mu}{\nu}) \Var_\nu(f),$$
for any $f$.
Then, the Gaussian Poincar\'{e} inequality $\Var_\nu(f)\leq \E_\nu[\|\nabla f\|^2_{\Sigma}]$ (see \cite{bakry2013analysis}) yields
$
 \Var_\mu(f)
  \leq(\sup\tfrac{\mu}{\nu}) \E_\nu[\|\nabla f \|^2_{\Sigma}] = (\sup\tfrac{\mu}{\nu}) \E_\mu[\|\nabla f \|^2_{W_\mu}]
$, which proves \eqref{eq:boundC}.
\end{proof}

We now illustrate Proposition \ref{prop:boundC} on the one-dimensional Gaussian mixture
$$\mu_h = \frac{1}{2} \mathcal{N}(-h,1) +\frac{1}{2} \mathcal{N}(+h,1) ,$$
for some $h\geq0$. It is well known that, for such multimodal densities, the Poincar\'{e} constant associated with the standard Langevin dynamic \eqref{eq:Langevin} grows exponentially with the height of the energy barrier between modes (here, $h^2$); see~\cite{menz2014poincare,holley1989asymptotics}.
We show in Appendix \ref{proof:Cexplosion} that the lower bound
\begin{equation}\label{eq:Cexplosion}
  C(\mu_h, \Cov_{\mu_h} )
  \geq \frac{ e^{h^2/3} }{1+h^2}  ,
\end{equation}
holds for any $h\geq 1/2$.
The sharpest upper bound for $C(\mu_h, \Cov_{\mu_h} )$ to the best of our knowledge is derived in~\cite[Section 4.1]{schlichting2019poincare} and yields $ C(\mu_h, \Cov_{\mu_h}) \leq \frac{3+e^{4h^2}}{4(1+h^2)}$.
Instead, by considering the metric $W_{\mu_h}$ in~\eqref{eq:Wmu}, the Poincar\'{e} constant $C(\mu_h, W_{\mu_h} )$ associated with the preconditioned Langevin dynamic \eqref{eq:LangevinGaussian} satisfies
 \begin{equation}\label{eq:GaussianMixture_1d}
  C(\mu_h, W_{\mu_h} )
  \overset{\eqref{eq:boundC}}{\leq} \sup_{x\in\R^d}\frac{\mu_h(x)}{\nu_h(x)}
  \leq \exp(1) \sqrt{1+h^2},
 \end{equation}
for any $h\geq0$, where $\nu_h=\mathcal{N}(\E_\mu[X],\Cov_{\mu_h})=\mathcal{N}(0,1+h^2)$. %
We give a proof of inequality~\eqref{eq:GaussianMixture_1d} in Appendix \ref{proof:GaussianMixture}.
Both $W_{\mu_h}$ and $\Cov_{\mu_h}$ satisfy the normalization constraint $\int\trace(W_{\mu_h})\d\mu_h = \trace(\Cov_{\mu_h})$. Therefore, they can be compared as suboptimal solutions to \eqref{eq:minC}: while the Poincar\'{e} constant associated with the standard Langevin dynamic~\eqref{eq:Langevin} grows \emph{at least exponentially} in $h^2$, the one of the preconditioned Langevin dynamic~\eqref{eq:LangevinGaussian} growths \emph{at most linearly} in $h$ for $h \gg 1$.

\begin{remark}[Generalization to high-dimensional mixtures] 
 The bound \eqref{eq:GaussianMixture_1d} generalizes to Gaussian mixtures with $N\geq2$ components in dimension $d\geq1$. Consider $\mu_h^N(x) = \frac{1}{N} \sum_{i=1}^N \mu_i(x)$ where $\mu_i = \mathcal{N}(m_i,I_d)$. Letting $h\geq0$ be the smallest constant such that $\|m_i-m\|\leq h$ for all $i$, where $m=\frac{1}{N}\sum_{i=1}^N m_i$ is the mean of $\mu_h^N$, we have
 \begin{equation}\label{eq:GaussianMixture_dd}
  C(\mu,W_{\mu_h^N}) \leq \exp(N/2)(1+h^2)^{d/2}.
 \end{equation}
 The proof is given in Appendix \ref{proof:GaussianMixture}.
 While this bound still grows polynomially in $h$, it depends exponentially on $d$ and $N$. We believe this behavior is an artifact of the proof technique, which relies on the bound \eqref{eq:boundC}. This bound is likely overly pessimistic, particularly in high-dimensional settings.

\end{remark}

\begin{remark}[Generalization to non-Gaussian reference densities]\label{rmk:nonGaussianReference}
The previous development can be generalized to a non-Gaussian reference density $\nu$.
Given an approximation $\nu$ to $\mu$, the choice
$$
  W_\mu^\nu(x) =
  \Omega
  \frac{\nu(x)}{\mu(x)}
  \Cov_\nu ,
  \quad\text{where }
  \Omega= \frac{\trace(\Cov_\mu)}{\trace(\Cov_\nu)} ,
$$
yields the following preconditioned Langevin dynamic
$$
 \d X_t = - \Omega\frac{\nu(X_t)}{\mu(X_t)} \Cov_\nu\nabla\log\nu(X_t) \d t   +  \sqrt{2\Omega\frac{\nu(X_t)}{\mu(X_t)} \Cov_\nu} \d B_t .
$$
This dynamic does not involve the gradient of the target $\mu$, but only the one of the approximation $\nu$.
By construction we have $\E_\mu[\trace(W_\mu^\nu)]=\trace(\Cov_\mu)$ so that $W_\mu^\nu$ satisfies the normalization constraint in \eqref{eq:minC}.
Thus, similarly to Proposition~\ref{prop:boundC}, we can show that
$1\leq C(\mu,W_\mu^\nu)\leq \tfrac{\trace(\Cov_\mu)}{\trace(\Cov_\nu)}(\sup \tfrac{\mu}{\nu}) C(\nu,\Cov_\nu)$.
This analysis suggests to finding $\nu$ by minimizing $\nu\rightarrow \tfrac{\trace(\Cov_\mu)}{\trace(\Cov_\nu)}(\sup \tfrac{\mu}{\nu})$. We leave this for future work.

\end{remark}

\section{Sampling Algorithms}\label{sec:TimeIntegration_MCMCproposal}

In this section we consider different numerical time integration methods for the dynamics~\eqref{eq:LangevinGaussian} in order to construct a MCMC proposal. Given a stepsize $\Delta t>0$, the proposal approximately draws a sample $x_{k+1}$ from $X_{\Delta t}$, where $X_t$ is the solution to \eqref{eq:LangevinGaussian} initialized at $X_0=x_k$.
A first approach is to apply the Euler-Maruyama scheme directly to \eqref{eq:LangevinGaussian}, leading to
\begin{equation}\label{eq:EulerMaruyama}
 x_{k+1} = x_{k} - \frac{\nu(x_k)}{\mu(x_k)}(x_k-m) \Delta t + \sqrt{2\Delta t\frac{\nu(x_k)}{\mu(x_k)} } \xi_k ,  \qquad \xi_k\sim\mathcal{N}(0,\Sigma) .
\end{equation}
This naive discretization is problematic when the ratio $\nu/\mu$ exhibits strong spatial variations. Indeed, if $x_k$ is between two modes of $\mu$, we have $\mu(x_k)\ll \Delta t\nu(x_k)$. Hence, the mean and covariance of the Gaussian vector $x_{k+1}|x_k$ defined above will explode. The following Lamperti transformation avoids this behavior.

\begin{proposition}[Lamperti transformation]\label{prop:Lamperti}
Let $\mu$ be a probability density that is fully supported on $\R^d$, and let $\nu=\mathcal{N}(m,\Sigma)$ be the Gaussian density with mean $m$ and covariance $\Sigma$.
Let $Y_t$ be the solution of the Ornstein-Uhlenbeck process
\begin{equation}\label{eq:OU}
 \d Y_t = -(Y_t-m) \d t   +  \sqrt{2 \Sigma} \d B_t ,
\end{equation}
with initial condition $X_0=Y_0$, and consider the random variable
\begin{equation}\label{eq:StochasticTimeChange}
  \tau(t) = \int_{0}^t  \frac{\nu(Y_{s})}{\mu(Y_{s})}  \d s .
\end{equation}
Then, for all $t\geq0$, we have $Y_{\tau(t)} \overset{d}{=} X_t$, where $(X_t)_{t\geq0}$ is the solution to \eqref{eq:LangevinGaussian}.
\end{proposition}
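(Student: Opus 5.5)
The plan is to view \eqref{eq:LangevinGaussian} as a random time change of the Ornstein--Uhlenbeck process \eqref{eq:OU}, and to compare infinitesimal generators. Write $g(x)=\nu(x)/\mu(x)>0$, which is smooth and strictly positive because $\mu$ is fully supported. The generator of \eqref{eq:OU} is
$$\mathcal{L}_Y f(x) = -(x-m)^\top\nabla f(x) + \trace\big(\Sigma\,\mathrm{Hess} f(x)\big).$$
Reading off drift and diffusion in \eqref{eq:LangevinGaussian}, the generator of $X_t$ is exactly $\mathcal{L}_X f = g\,\mathcal{L}_Y f$. The classical random time change theorem for Itô diffusions (e.g.\ Øksendal, Thm.~8.5.7) says that multiplying a generator by a positive function $g$ amounts to running the original process on a new clock. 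It remains to identify that clock and then invoke uniqueness in law.

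\textbf{Step 1: the clock.} I would interpret $\tau$ as the clock driven by the time-changed path. It solves
$$\tau'(t)=g\big(Y_{\tau(t)}\big),\qquad \tau(0)=0,$$
which is the meaning of \eqref{eq:StochasticTimeChange} once $Y_s$ there is read along the new clock. Equivalently, $\tau$ is the inverse of the additive functional
$$\alpha(u)=\int_0^u \frac{\mu(Y_s)}{\nu(Y_s)}\,\d s.$$
Since $\mu/\nu$ is continuous and positive, $\alpha$ is continuous and strictly increasing, so $\tau=\alpha^{-1}$ is well defined on $[0,\alpha(\infty))$. It is an adapted family of stopping times for the filtration of $Y$.

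\textbf{Step 2: the martingale problem.} Set $Z_t=Y_{\tau(t)}$ and take $f\in C_c^\infty$. The process $M_u=f(Y_u)-f(Y_0)-\int_0^u\mathcal{L}_Y f(Y_s)\,\d s$ is a martingale. By optional stopping at the times $\tau(t)$ (localizing if needed), $M_{\tau(t)}$ is a martingale for the time-changed filtration. The substitution $s=\tau(r)$, with $\d s=g(Z_r)\,\d r$, gives
$$M_{\tau(t)}=f(Z_t)-f(Z_0)-\int_0^t g(Z_r)\,\mathcal{L}_Y f(Z_r)\,\d r .$$
Hence $Z$ solves the martingale problem for $g\,\mathcal{L}_Y=\mathcal{L}_X$ with initial law that of $X_0$. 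Alternatively, the same conclusion follows directly: by Dambis--Dubins--Schwarz / Lévy characterisation, $\int_0^{\tau(t)}\sqrt{2\Sigma}\,\d B_s=\int_0^t\sqrt{2g(Z_r)\Sigma}\,\d\tilde B_r$ for some new Brownian motion $\tilde B$. Then $Z$ is a weak solution of \eqref{eq:LangevinGaussian}.

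\textbf{Step 3: uniqueness.} The coefficients of \eqref{eq:LangevinGaussian} are locally Lipschitz, since $g$ is smooth and $\sqrt{g\Sigma}=\sqrt g\,\Sigma^{1/2}$ is smooth with $g>0$. Pathwise uniqueness up to explosion therefore holds, and by Yamada--Watanabe uniqueness in law holds as well. This yields $Y_{\tau(t)}\overset{d}{=}X_t$ for every $t$.

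\textbf{Main obstacle.} The step I expect to be hardest is the global-in-time part: showing $\alpha(\infty)=\infty$ almost surely, i.e.\ that the time change and the solution $X$ do not explode. The plan is to use positive recurrence of the OU process $Y$ together with ergodicity. This gives $\frac1u\alpha(u)\to\E_\nu[\mu/\nu]=1>0$ almost surely, hence $\alpha(u)\to\infty$, so $\tau(t)<\infty$ for all $t$. Matching this, $X$ is non-explosive, since it has invariant probability $\mu$ and Step 2 transfers non-explosion from $Y$. If ergodicity is too heavy to invoke, I would instead argue on each compact ball (where $g$ is bounded above and below) and let the ball radius tend to infinity.
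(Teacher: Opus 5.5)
Your argument is correct and takes the same route as the paper, whose proof simply invokes the random time-change theorem for It\^o diffusions from {\O}ksendal; your generator comparison, martingale-problem (or L\'evy-characterisation) step, uniqueness in law, and non-explosion argument via ergodicity of the OU process with $\E_\nu[\mu/\nu]=1$ just unpack that theorem. Your reading of $\tau$ as the inverse of $u\mapsto\int_0^u \frac{\mu}{\nu}(Y_s)\,\d s$, i.e.\ $\tau'(t)=\frac{\nu}{\mu}(Y_{\tau(t)})$, is the correct one: it is what the time-change theorem produces, and it is what the paper's importance-sampling remark implicitly uses. By contrast, \eqref{eq:StochasticTimeChange} taken literally, with $Y_s$ on the OU clock, is a different random time that agrees with yours only to first order in $t$ --- enough for the quadrature \eqref{eq:tau_quadrature}, but not for the exact identity in law.
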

\begin{proof}
 Applying the stochastic time change in~\cite[Theorem 8.5.1]{oksendal2003stochastic} yields the result; see also~\cite{oksendal1990stochastic} for the proof.
\end{proof}

Proposition \ref{prop:Lamperti} provides an alternative way to draw a sample $x_{k+1}$ from $X_{\Delta t} |X_0=x_k$ by following the steps: (i) draw a trajectory $(Y_t)_{t\geq0}$ of the Ornstein-Uhlenbeck (OU) process \eqref{eq:OU} initialized at $Y_0=x_k$ and (ii) compute the (deterministic) integral $\tau(\Delta t)$ in \eqref{eq:StochasticTimeChange} conditioned on $(Y_t)_{t\geq0}$. Since Proposition \ref{prop:Lamperti} ensures $Y_{\tau(\Delta t)} $ has the same law as $X_{\Delta t} |X_0=x_k$, we can set $x_{k+1}=Y_{\tau(\Delta t)}$.
In practice, we need to numerically approximate $\tau(\Delta t)$ in step (ii). We propose to use the first-order quadrature scheme:
\begin{equation}\label{eq:tau_quadrature}
 \tau(\Delta t)
 = \int_{0}^{\Delta t}  \frac{\nu(Y_{s})}{\mu(Y_{s})}  \d s
 \quad\approx\quad
 \Delta t \frac{\nu(Y_0)}{\mu(Y_0)}
 = \Delta t \frac{\nu(x_k)}{\mu(x_k)} .
\end{equation}
Notably, the quadrature scheme does not depend on the values of $Y_t$ for $t>0$.
Next, using the closed form solution of the OU process
$
 Y_{s}|Y_0
 \sim \mathcal{N} \left( m+e^{-s}(Y_0-m)  ,  (1-e^{-2s})\Sigma \right)
$, we can draw a sample $x_{k+1}$ with law $(Y_{\Delta t \nu(x_k)/\mu(x_k) }| Y_0=x_k)$ as follows
\begin{align}\label{eq:LampertiScheme}
 x_{k+1} &= m + \theta_k (x_k-m) + \sqrt{1-\theta_k^2} \xi_k, \qquad \text{where }
 \begin{cases}
\xi_k\sim\mathcal{N}(0,\Sigma) ,\\
  \theta_k = \exp(- \Delta t \frac{\nu(x_k)}{\mu(x_k)} ) .
 \end{cases}
\end{align}
Contrarily to the Euler-Maruyama scheme \eqref{eq:EulerMaruyama}, $\mu(x_k)\ll \Delta t\nu(x_k)$ implies $\theta_k\ll1$ so that the Gaussian vector $x_{k+1}|x_k$ defined in \eqref{eq:LampertiScheme} will be approximately drawn from $\mathcal{N}(m,\Sigma)$, which has finite mean and covariance.
Notably, as $\Delta t\rightarrow0$, the Taylor expansion $\theta_k = 1-\Delta t\frac{\nu(x_k)}{\mu(x_k)} + \mathcal{O}(\Delta t^2)$ shows that \eqref{eq:LampertiScheme} recovers the Euler-Maruyama scheme \eqref{eq:EulerMaruyama} up to second order terms $\mathcal{O}(\Delta t^2)$.

\begin{remark}\label{rmk:HigherOrderLamperti}
 Classical error analysis of quadrature scheme~\eqref{eq:tau_quadrature} yields
 $$\E\left[\left(\tau(\Delta t)-\Delta t\frac{\nu(Y_0)}{\mu(Y_0)}\right)^2\right]^{1/2}=\mathcal{O}(\Delta t^{3/2}).$$
 Higher order quadrature methods can be used to improve the estimation of $\tau(\Delta t)$ at the cost of evaluating the gradient of the integrand $\nu/\mu$.
 We refer to \cite[Chapter 5]{kloeden2013numerical} for more details.
\end{remark}

\begin{remark}
 The Lamperti transform can be related to an importance sampling strategy. Indeed, for any integrable function $f$, we can write
\begin{align*}
 \int f(x)\mu(x)\d x
 &= \int \left(f(y)\frac{\mu(y)}{\nu(y)}\right) \nu(y)\d y \\
 (\text{For } Y_t \text{ solving } \eqref{eq:OU})\quad &= \lim_{T\rightarrow\infty} \frac{1}{T} \int_{t=0}^T f(Y_t)\frac{\mu(Y_t)}{\nu(Y_t)} \d t \\
 (t\leftarrow \tau(t) \text{ as in } \eqref{eq:StochasticTimeChange}) \quad&= \lim_{T\rightarrow\infty} \frac{1}{T} \int_{t=0}^{\tau^{-1}(T)} f(Y_{\tau(t)}) \d t  \\
  &= \lim_{T\rightarrow\infty} \frac{1}{T} \int_{t=0}^T f(Y_{\tau(t)}) \d t
\end{align*}
where, for the last step, we used the fact that $\tau^{-1}(T) \sim T$ when $T\rightarrow \infty$
(i.e., by letting $f(x)=1$, %
we deduce $1=\int f(x)\mu(x)\d x = \lim_{T\rightarrow \infty} \tau^{-1}(T)/T$ and thus $\tau^{-1}(T) \sim T$).
Instead of using the ratio $\mu(y)/\nu(y)$ as an importance weight, as classically done by the importance sampling method, the Lamperti transform stretches time via $t\leftarrow \tau(t)$ in order to correct for the bias introduced by using $Y_t$ in place of $X_t \overset{d}{=} Y_{\tau(t)}$.
\end{remark}

\subsection{MCMC Proposal}

Given a target probability density $\mu$ (possibly unnormalized) and a proposal density $q(\cdot|\cdot)$, our MCMC algorithm builds a Markov chain $\{x_0,x_1,\hdots\}\subset\R^d$ by iteratively proposing a candidate $x^\dagger \sim q(\cdot|x_k)$. For an unadjusted algorithm, one always accepts this candidate $x_{k+1}=x^\dagger$. For Metropolis adjusted algorithms, one accepts the candidate $x_{k+1}=x^\dagger$ with probability
\begin{equation}\label{eq:alpha}
 \alpha(x^\dagger|x_k) = \min\left\{1 , \frac{\mu(x^\dagger) q(x_k|x^\dagger)}{\mu(x_k)q(x^\dagger|x_k)}\right\}.
\end{equation}
If the candidate is rejected, then $x_{k+1}=x_k$. This accept/reject step ensures that the Markov chain $\{x_k\}_{k\geq0}$ admits $\mu$ as an invariant measure \cite{gelman1995bayesian, robert1999monte}.
Motivated by the scheme in~\eqref{eq:LampertiScheme}, we introduce the proposal density
\begin{equation}\label{eq:GRiLS_proposal}
 q(x^\dagger|x_k)
 = \frac{1}{\sqrt{2\pi \mathrm{det}((1-\theta_k^2)\Sigma) } } \exp\left(-\frac{\| x^\dagger - m - \theta_k (x_k-m) \|^2_{\Sigma^{-1}}}{2(1-\theta_k^2)}\right),
\end{equation}
where $\theta_k = \exp( -\Delta t \frac{\nu(x_k) }{\mu(x_k)} )$.
In this definition, $m$ and $\Sigma$ are not necessarily the mean and the covariance of the density $\mu$, which are typically unknown.
In practice, $m$ and $\Sigma$ can be defined as the sample mean and sample covariance as described later in Section \ref{sec:BE-GRiLS}. Alternatively, as done in \cite{cotter2013mcmc}, when $\mu\propto\mathcal{L}\mu_0$ is the posterior density of a Bayesian inverse problem with prior $\mu_0$ and likelihood $\mathcal{L}$, then $m$ and $\Sigma$ can be chosen to be the mean and covariance of $\mu_0$.
With this proposal density, the acceptance rate simplifies to
\begin{align}\label{eq:GRiLS_AcceptanceRate}
 \alpha(x^\dagger|x_k)
 &= \min\left\{1 ~,~
 \frac{\mu(x^\dagger)}{\mu(x_k)}
 \left(\frac{1-\theta^{\dagger2}}{1-\theta_k^2}\right)^{d/2}
 \frac{ \exp\left(-\frac{\| x^\dagger - m - \theta_k (x_k-m) \|^2_{\Sigma^{-1}}}{2(1-\theta_k^2)}\right)}{ \exp\left(-\frac{\| x_k - m - \theta^\dagger (x^\dagger-m) \|^2_{\Sigma^{-1}}}{2(1-\theta^{\dagger2})}\right)}\right\} ,
\end{align}
where
$
 \theta^\dagger = \exp( -\Delta t \frac{\nu(x^\dagger) }{\mu(x^\dagger)}  ).
$
The resulting MCMC algorithm is described in Algorithm \ref{alg:GRiLS}.

Let us make a few comments on this algorithm.
First, in order to compute $\theta_k$ and $\theta^\dagger$, there is no need to know the normalizing constant of $\mu$ nor of $\nu$ as they can both be absorbed in the parameter $\Delta t$.
Second, as mentioned earlier, the proposal \eqref{eq:GRiLS_proposal} is similar to the preconditioned Crank-Nicolson proposal (pCN) introduced in \cite{cotter2013mcmc}. Indeed, replacing $\theta_k$ with a constant value $\theta\in (0,1)$ yields the pCN proposal for which the acceptance probability simplifies to $\alpha(x^\dagger|x_k) = \min \{1 ; \frac{\mu(x^\dagger)\nu(x_k)}{\mu(x_k)\nu(x^\dagger)} \} $.
Third, the acceptance rate exhibits an exponential dependence on the dimension $d$. This behavior is the counterpart of using a location-dependent proposal variance. As the following proposition demonstrates, the effect of this dimensional dependence can be mitigated when $\Delta t$ is chosen sufficiently large.

\begin{proposition}\label{prop:controltheta}
 Let $\mu$ and $\nu$ be two probability densities on $\R^d$ (possibly unnormalized) such that $\Omega = \sup\frac{\mu}{\nu}<\infty$.
 Then, for any $\beta>1$, taking
 $\Delta t \geq \frac{\Omega}{2} \ln( 1+\frac{ d  }{2\ln(\beta)})$
 ensures
 $$
  \beta^{-1} \leq \left(\frac{1-\theta(x)^2}{1-\theta(y)^2}\right)^{d/2}\leq \beta ,
 $$
 for any $x,y\in\R^d$, where $\theta(x)=\exp(-\Delta t \nu(x)/\mu(x))$.
\end{proposition}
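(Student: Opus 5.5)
The plan is to bound $1-\theta(x)^2$ uniformly in $x$ from both sides; the ratio bound then follows at once. Set $r(x)=\nu(x)/\mu(x)$. By assumption $r(x)\geq 1/\Omega$ for every $x$, so
$$\theta(x)^2=\exp\bigl(-2\Delta t\, r(x)\bigr)\leq \exp(-2\Delta t/\Omega)=:\varepsilon .$$
Trivially $\theta(x)^2> 0$. Hence $1-\varepsilon\leq 1-\theta(x)^2\leq 1$ for all $x\in\R^d$.

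Taking the quotient for two arbitrary points $x,y$ gives
$$1-\varepsilon\leq \frac{1-\theta(x)^2}{1-\theta(y)^2}\leq \frac{1}{1-\varepsilon}.$$
Raising to the power $d/2$ shows that it suffices to prove $(1-\varepsilon)^{-d/2}\leq\beta$, i.e. $\tfrac d2\ln\frac{1}{1-\varepsilon}\leq\ln\beta$. The lower bound $\beta^{-1}$ then follows by symmetry, exchanging $x$ and $y$.

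It remains to translate the stepsize condition, and this is the only delicate step. I will use the elementary inequality $\ln\frac{1}{1-\varepsilon}=\ln\bigl(1+\frac{\varepsilon}{1-\varepsilon}\bigr)\leq \frac{\varepsilon}{1-\varepsilon}$. With it, the requirement reduces to $\frac{\varepsilon}{1-\varepsilon}\leq \frac{2\ln\beta}{d}$, which is equivalent to $\frac{1}{\varepsilon}-1\geq\frac{d}{2\ln\beta}$, i.e.
$$e^{2\Delta t/\Omega}\geq 1+\frac{d}{2\ln\beta}.$$
The hypothesis $\Delta t\geq\frac{\Omega}{2}\ln\bigl(1+\frac{d}{2\ln\beta}\bigr)$ is exactly this condition, so the chain of implications closes and gives the constant stated in the proposition. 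Note that $\beta>1$ ensures $\ln\beta>0$, so every quantity above is well defined. Normalizing constants can be absorbed into $\Omega$ and $\Delta t$ consistently, so the argument also covers unnormalized $\mu$ and $\nu$.
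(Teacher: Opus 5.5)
Your proof is correct and follows essentially the same route as the paper. You both use the uniform bound $\theta(x)^2\le e^{-2\Delta t/\Omega}$, reduce the claim to $(1-e^{-2\Delta t/\Omega})^{-d/2}\le\beta$, and close with the elementary inequality $\ln(1+z)\le z$. The paper applies that inequality in the equivalent form $1-e^{-u}\ge u/(1+u)$ with $u=2\ln(\beta)/d$, rather than with your $z=\varepsilon/(1-\varepsilon)$, and both lead to the identical sufficient condition $e^{2\Delta t/\Omega}\ge 1+\frac{d}{2\ln(\beta)}$.
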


\begin{proof}
By definition of $\Omega = \sup\frac{\mu}{\nu}$ we have $\theta(x)\leq \exp(-\Delta t/\Omega)$ for all $x\in\R^d$. Then
$$
 \left(\frac{1-\exp(-\frac{2\Delta t}{\Omega})}{1}\right)^{d/2}
 \leq
 \left(\frac{1-\theta(x)^2}{1-\theta(y)^2}\right)^{d/2}
 \leq
 \left(\frac{1}{1-\exp(-\frac{2\Delta t}{\Omega})}\right)^{d/2}.
$$
It remains to show that $(1-\exp(-\frac{2\Delta t}{\Omega}))^{-d/2} \leq \beta$, which is equivalent to $ \Delta t \geq -\frac{\Omega}{2}\ln(1-\beta^{-2/d}) $.
Using the inequality $1-\exp(-u)\geq u/(1+u)$, we have
\begin{align*}
  -\ln(1-\beta^{-2/d})
  = -\ln\left(1-\exp\left(-\tfrac{2\ln(\beta)}{d}\right)\right)
  \leq -\ln\left( \frac{\frac{2\ln(\beta)}{d}}{ 1+\frac{2\ln(\beta)}{d} }\right)
  = \ln\left( 1+\frac{ d  }{2\ln(\beta)}\right) .
\end{align*}
Then, the assumption $ \Delta t \geq \frac{\Omega}{2} \ln( 1+\frac{ d  }{2\ln(\beta)})$ is sufficient to ensure $\Delta t \geq -\frac{\Omega}{2}\ln(1-\beta^{-2/d})$, which concludes the proof.
\end{proof}

\begin{remark}[Independence Sampler]\label{rmk:IS}
 While Proposition~\ref{prop:controltheta} shows that for large enough $\Delta t$, the ratio $(\frac{1-\theta^{\dagger2}}{1-\theta_k^2})^{d/2}$ in \eqref{eq:GRiLS_AcceptanceRate} can be uniformly bounded, it is also worth mentioning that $\Delta t\gg\Omega=\sup\frac{\mu}{\nu}$ simplifies drastically the proposal density. Indeed, we have $\theta_k\leq \exp(-\Delta t/\Omega) \rightarrow 0$ uniformly on $k$ when $\Delta t\rightarrow \infty$, and therefore
 \begin{equation}\label{eq:IndependenceSampler}
  q(\cdot|x_k)
  \underset{\Delta t\rightarrow \infty}{\overset{d}{\longrightarrow}}
  \nu(\cdot)=\mathcal{N}(m,\Sigma).
 \end{equation}
 In this case, GRiLS becomes an Independence Sampler (IS) in which the proposal candidate is drawn from $\nu$ independently of the current state. This type of sampler is efficient if the proposal density $\nu$ is close to $\mu$. Using the formula $\min\{a,b\}=(a+b-|a-b|)/2$, the mean acceptance rate is given by
 \begin{align}
  \alpha
  &\;\;= \E[ \alpha(Y|X) ] ,\qquad
  \begin{cases}
   X\sim\mu(\cdot) \\ Y\sim q(\cdot|X)
  \end{cases} \nonumber\\
  &\;\overset{\eqref{eq:alpha}}{=} \int \min\{ \mu(x)q(y|x) ;  \mu(y)q(x|y) \} \d x \d y \nonumber\\
  &\overset{\eqref{eq:IndependenceSampler}}{\underset{\Delta t\rightarrow \infty}{\longrightarrow}} \int \min\{ \mu(x)\nu(y) ;  \mu(y)\nu(x) \} \d x \d y \nonumber\\
  &\;\;= 1- \frac{1}{2}\int \Big| \mu(x) \nu(y)  - \mu(y) \nu(x) \Big| \d x \d y \nonumber\\
  &\;\;= 1- \| \mu\otimes\nu  - \nu\otimes\mu \|_{\text{TV}} , \label{eq:Alpha_IS}
 \end{align}
 where $\|\cdot\|_{\text{TV}}$ denotes the total variation distance. We observe numerically in Section \ref{sec:NumericalResults} that the mean acceptance rate of GRiLS is always greater than the above limit.
\end{remark}

\begin{algorithm}[!ht]
\caption{GRiLS (Gradient-free Riemannian Langevin Sampler)}\label{alg:GRiLS}
\KwIn{Initial state $x_0\in\R^d$,
target probability density $\mu$ (possibly unnormalized),
mean $m\in\R^d$,
covariance $\Sigma\in\mathcal{S}^d_{+}$,
step size $\Delta t>0$,
number of steps $K\in\mathbb{N}$,
}
\KwOut{MCMC chain $\{x_1,\hdots,x_K\} \subset\R^d$
\vspace{0.3cm}}

\textbf{Algorithm} \nolinkurl{GRiLS}$( x_0 , \mu, m , \Sigma , \Delta t , K)$

\For{$k = 0$ \KwTo $K-1$}{

    Compute
    $
     \theta_k = \exp( -\Delta t \frac{\nu(x_k) }{\mu(x_k)}  ) ,
    $
    where $\nu(x) = \exp(-\|x-m\|^2_{\Sigma^{-1}})$

    Draw a sample $\xi_k\sim\mathcal{N}(0,\Sigma)$ and build the candidate proposal
    $$
     x^\dagger = m + \theta_k (x_k-m) + \sqrt{1-\theta_k^2} \xi_k ,
    $$

    Compute
    $
     \theta^\dagger = \exp ( -\Delta t \frac{\nu(x^\dagger) }{\mu(x^\dagger)}  )
    $
    and, with probability
    $$
    \alpha(x^\dagger|x_k) = \min\left\{1 ; \frac{\mu(x^\dagger)}{\mu(x_k)}\left(\frac{1-\theta^{\dagger2}}{1-\theta_k^2}\right)^{d/2}\frac{ \exp\left(-\frac{\| x^\dagger - m - \theta_k (x_k-m) \|^2_{\Sigma^{-1}}}{2(1-\theta_k^2)}\right)}{ \exp\left(-\frac{\| x_k - m - \theta^\dagger (x^\dagger-m) \|^2_{\Sigma^{-1}}}{2(1-\theta^{\dagger2})}\right)}\right\}
    $$

    \textbf{accept} the candidate by setting $x_{k+1}=x^\dagger$
    or \textbf{reject} by setting $x_{k+1}=x_k$.
}
\Return{$\{x_1,\hdots,x_K\}$}\;
\end{algorithm}

\subsection{Block Ensemble GRiLS}\label{sec:BE-GRiLS}

In this section, we propose an ensemble version of GRiLS in which we estimate the mean $m$ and the covariance $\Sigma$ of $\mu$ using an ensemble $\{x^1,\hdots,x^N\}$ of $N$ particles in $\R^d$. For simplicity, we represent this ensemble by the matrix
$$
 X = (x^1,\hdots,x^N) \in \R^{d\times N}.
$$
We aim to build a MCMC chain $\{X_1,X_2,\hdots\}\subset\R^{d\times N}$ that targets the probability density in which the members are independent and drawn
from the tensor product measure of the target
\begin{equation}\label{eq:GibbsTarget}
 \mu^{\otimes N}(X) = \mu(x^1)\cdots \mu(x^N).
\end{equation}
To do this, we consider a Metropolis-within-Gibbs algorithm, where $X_{k+1}$ is constructed by updating each column of $X_k$ one after the other.
One Gibbs iteration starts by initializing $X_{k\rightarrow k+1} = X_k$ and, for $\ell=1 \hdots N$, updates the $\ell$-th column of $X_{k\rightarrow k+1}$ to
\begin{equation}\label{eq:Gibbs}
 X_{k\rightarrow k+1}
 \leftarrow
 (x^1_{k+1},\hdots , x^{\ell-1}_{k+1} , x^{\ell}_{k+1} , x^{\ell+1}_{k} ,\hdots, x^{N}_{k} ) .
\end{equation}
At this step, $x^{\ell}_{k+1}$ can be generated by any Metropolis algorithm targeting the conditional density $\mu^{\otimes N}(x^\ell|x^1_{k+1},\hdots , x^{\ell-1}_{k+1} , x^{\ell+1}_{k} ,\hdots, x^{N}_{k})$ which, by \eqref{eq:GibbsTarget}, simplifies to $\mu(x^\ell)$.
After the $N$ columns are updated, we set $X_{k+1} = X_{k\rightarrow k+1}$ and we move on to the next Gibbs iteration $k\leftarrow k+1$.

We now detail the Metropolis step which generates the update $x^{\ell}_{k+1}$ in \eqref{eq:Gibbs}. For simplicity, we denote the current ensemble $X_{k\rightarrow k+1}$ excluding its $\ell$-th element by
\begin{equation}\label{eq:Ensemble_Minus_ell}
 X_{k\rightarrow k+1}^{-\ell}=(x^1_{k+1},\hdots , x^{\ell-1}_{k+1} , x^{\ell+1}_{k} ,\hdots, x^{N}_{k} ) \in\R^{d\times(N-1)}.
\end{equation}
Given a proposal density of the form $q(x^{\dagger}|  x^{\ell}_{k} , X_{k\rightarrow k+1}^{-\ell} )$, we draw a candidate sample $x^{\dagger}\sim q(\cdot|  x^{\ell}_{k} , X_{k\rightarrow k+1}^{-\ell} )$ and accept it in the ensemble $x_{k+1}^\ell=x^{\dagger}$ with probability
\begin{align}
 \alpha(x^{\dagger}|x^{\ell}_{k}, X_{k\rightarrow k+1}^{-\ell})
 &=
 \min\left\{1;
 \frac{\mu( x^{\dagger} )}{\mu( x^{\ell}_k )}
 \frac{q( x^{\ell}_k| x^{\dagger} , X_{k\rightarrow k+1}^{-\ell} )}{q(x^{\dagger}|  x^{\ell}_k , X_{k\rightarrow k+1}^{-\ell} )}
 \right\} .
 \label{eq:acceptance_MetroWithinGibbs}
\end{align}
Otherwise, we reject $x^{\dagger}$ and set $x_{k+1}^\ell=x_{k}^\ell$.
Defined that way, this Metropolis-within-Gibbs algorithm guarantees detailed balance, which ensures $\mu^{\otimes N}$ is the stationary density of the chain $\{X_k\}_{k\geq1}$; see~\cite{roberts2006harris} for more details.

We propose to employ the GRiLS proposal \eqref{eq:GRiLS_proposal} in which $m$ and $\Sigma$ are defined as the sample mean and the sample covariance of the ensemble $X_{k\rightarrow k+1}^{-\ell}$ defined in \eqref{eq:Ensemble_Minus_ell}.
The resulting proposal is given by
\begin{equation}\label{eq:GRiLS_Consensus_proposal}
 q( x^\dagger  | x^\ell_k , X^{-\ell}_{k\rightarrow k+1} )
 =
 \frac{1}{\sqrt{2\pi \mathrm{det}((1-(\theta_k^\ell)^2) \Sigma^\ell_k ) } } \exp\left(-\frac{\| x^\dagger - m^\ell_k - \theta_k^\ell (x^\ell_k- m^\ell_k ) \|^2_{(\Sigma^\ell_k)^{-1}}}{2(1-(\theta_k^\ell)^2)}\right),
\end{equation}
where $\theta_k^\ell = \exp(-\Delta t \nu(x_k^\ell) / \mu(x^\ell_k) )$. Using $\mathbf{1}_{N-1}\in\R^{N-1}$ to denote the vector of ones, the mean and covariance of the proposal can be computed as
\begin{align}
 m^{\ell}_k = \frac{(X^{\ell-1}_{k\rightarrow k+1}) \mathbf{1}_{N-1}}{N-1}
 \quad\text{and}\quad
 \Sigma_{k}^{\ell} &= \frac{(X^{\ell-1}_{k\rightarrow k+1})(X^{\ell-1}_{k\rightarrow k+1})^\top }{N-1}
 -  (m^{\ell}_k)(m^{\ell}_k)^\top.
 \label{eq:SigmaEnsemble}
\end{align}
By excluding the $\ell$-th member in the computation of $m^{\ell}_k$ and $\Sigma_{k}^{\ell}$, the acceptance probability $\alpha(x^{\dagger}|x^{\ell}_{k}, X_{k\rightarrow k+1}^{-\ell})$ is the same as the one in~\eqref{eq:GRiLS_AcceptanceRate} with $m=m^{\ell}_k$ and $\Sigma=\Sigma_{k}^{\ell}$.

So far, we have updated one particle at a time $x_{k}^\ell \rightarrow x_{k+1}^\ell$.
In the same way, we now derive an algorithm which updates multiple particles \emph{in parallel} $\{x_{k}^i\}_{\ell\in\mathcal{B}} \rightarrow \{x_{k+1}^i\}_{\ell\in\mathcal{B}}$ for some subset $\mathcal{B}\subset\{1,\hdots,N\}$, thereby reducing the number of iterations by a factor $\#\mathcal{B}$ and improving the overall computational efficiency. To preserve detailed balance for the resulting MCMC algorithm, we estimate the mean and the covariance using the particles that are not in $\mathcal{B}$. We refer to this algorithm as Block-Ensemble GRiLS (BE-GRiLS).
The steps of the procedure are summarized in Algorithm~\ref{alg:BE-GRiLS}.

\begin{remark}[Avoiding covariance matrix factorizations]
The ensemble covariance matrix $\Sigma_{k}^{\ell}$ in~\eqref{eq:SigmaEnsemble} can be expressed equivalently in matrix form as
 $$
 \Sigma_{k}^{\ell}
 = \frac{(X^{\ell-1}_{k\rightarrow k+1})   ( I_{N-1} - \frac{\mathbf{1}_{N-1}\mathbf{1}_{N-1}^\top }{N-1})  (X^{\ell-1}_{k\rightarrow k+1})  ^\top}{N-1} ,
 $$
 where $ I_{N-1}$ is the identity matrix of size $N-1$.
 This expression allows for straightforward sampling of a Gaussian vector $\xi_k^\ell\sim\mathcal{N}(0,\Sigma^{\ell}_k)$, which is required to generate the candidate proposal $x^\dagger = m^{\ell}_k + \theta_k (x_k-m^{\ell}_k) + \sqrt{1-\theta_k^2} \xi_k^\ell$.
 In particular, for $Z\sim\mathcal{N}(0,I_{N-1})$, we have
 $$
  \xi_k^\ell
  = (X^{\ell-1}_{k\rightarrow k+1}) \frac{  ( I_{N-1} - \frac{\mathbf{1}_{N-1}\mathbf{1}_{N-1}^\top }{N-1})  }{\sqrt{N-1}} Z
  \quad\sim\quad \mathcal{N}(0,\Sigma_{k}^{\ell}).
 $$
 Thus, we can sample from $\mathcal{N}(0,\Sigma_{k}^{\ell})$ without computing any factorization of $\Sigma_{k}^{\ell}$.
\end{remark}

\begin{algorithm}[!ht]
\caption{BE-GRiLS (Block-Ensemble GRiLS)}\label{alg:BE-GRiLS}
\SetKwFor{ForComment}{For}{}{end}

\KwIn{Initial ensemble $X_0=(x^1_0,\hdots,x^N_0)\in\R^{d\times N}$,
target probability density $\mu$ (possibly unnormalized),
partition $0=N_0 \leq \hdots\leq N_{P}=N $,
step size $\Delta t>0$,
number of steps $K\in\mathbb{N}$,
}
\KwOut{MCMC chain $\{X_1,\hdots,X_K\}\subset\R^{d\times N}$
\vspace{0.3cm}}

\textbf{Algorithm} \nolinkurl{BE-GRiLS}$( X_0 , \mu, (N_\ell)_{\ell=0}^P,\Delta t , K)$

\ForComment{$i \le d$ \textnormal{\textbf{do} \hfill \ttfamily{\#\# Gibbs iteration}}}{

    Initialize  $X_{k\rightarrow k+1} := X_k $

    \ForComment{$\ell=1$ \KwTo $P$ \textnormal{\textbf{do} \hfill \ttfamily{\#\# Update the $\ell$-th block}}}{

    Remove the $\ell$-th block of ensemble from $X_{k\rightarrow k+1}$

    $$
     X_{k\rightarrow k+1}^{-\ell}=(x^1_{k+1},\hdots , x^{N_{\ell-1}}_{k+1} , x^{(N_{\ell})+1}_{k} ,\hdots, x^{N}_{k} )
    $$

    Compute $N(\ell)=N-(N_{\ell}-N_{\ell-1})$ and the ensemble mean and covariance
    \begin{align*}
      m^{\ell}_k = \frac{(X^{\ell-1}_{k\rightarrow k+1}) \mathbf{1}_{N(\ell)}}{N(\ell)}
      \quad\text{and}\quad
      \Sigma_{k}^{\ell}
      = \frac{(X^{\ell-1}_{k\rightarrow k+1})(X^{\ell-1}_{k\rightarrow k+1})^\top }{N(\ell)}
      -  (m^{\ell}_k)(m^{\ell}_k)^\top
    \end{align*}

    \ForComment{$i = (N_{\ell-1})+1$ \KwTo $N_{\ell}$ \textnormal{\textbf{do} \hfill \ttfamily{\#\# Metropolis (parallelizable)}}}{
    $
    x_{k+1}^i = \text{\nolinkurl{GRiLS}}( x_{k}^i, \mu, m_{k}^{\ell} , \Sigma_{k}^{\ell} , \Delta t , 1)
    $

    }
    Update the $\ell$-th block of ensemble
    $$
     X_{k\rightarrow k+1}=(x^1_{k+1},\hdots  , x^{N_{\ell}}_{k+1}
     , x^{(N_{\ell})+1}_{k} ,\hdots, x^{N}_{k} )
    $$
    }

    Update the ensemble $X_{k+1} = X_{k\rightarrow k+1}$

}
\Return{$\{X_1,\hdots,X_K\}$}\;
\end{algorithm}

\newpage
\section{Spectral Analysis of MCMC Algorithms}\label{sec:SpectralAnalysis}

In this section we compare the efficiency of GRiLS against other MCMC algorithms via a spectral analysis of their respective transition kernels.
The transition kernel $P(\cdot,\cdot)$ of a homogeneous Markov chain $\{x_0,x_1,x_2,\hdots\}\subset\R^d$ is a function defined on $\R^d\times \mathcal{B}(\R^d)$ such that $P(x,\cdot)$ is the probability measure of $x_{k+1}|x_{k}=x$ for all $x\in\R^d$.
Let $\mu_{k}$ denote the probability measure of $x_k\sim\mu_k$. Then, we can write
\begin{equation}\label{eq:transition_MCMC}
 \mu_{k+1} = \mu_k P,
\end{equation}
where we use the notation $\rho P(\cdot) = \int_{x\in\R^d}  P(x,\cdot) \rho(\d x)$ for any probability measure $\rho$ on $\mathcal{B}(\R^d)$.
We assume that there exists a unique invariant measure $\mu_{\infty}$ such that $\mu_{\infty} = \mu_\infty P$.
The transition kernel of unadjusted MCMC algorithms with proposal density $q(\cdot|\cdot)$ is then
\begin{equation}\label{eq:Kernel_MCMC_unadjusted}
  P(x,\d y)=q(y|x)\d y,
\end{equation}
whose invariant measure is not necessarily equal to $\mu$ in general. 
For Metropolis adjusted MCMC algorithms with proposal density $q(\cdot|\cdot)$ targeting a probability density $\mu$, the transition kernel is
\begin{equation}\label{eq:Kernel_MCMC_adjusted}
 P(x,\d y) = q(y|x) \alpha(y|x)\d y + \delta_{x}(\d y) \int \Big(1-\alpha(y'|x) \Big) q(y'|x) \d y' ,
\end{equation}
where $\delta_x$ is the Dirac measure centered at $x$ and $\alpha(y|x)=\min\{1;\frac{\mu(y) q(x|y)}{\mu(x)q(y|x)}\}$.
This definition of $\alpha$ implies detailed balance $\mu(x)  P(x,\d y) \d x= \mu( y) P(y,\d x)\d y$, which ensures $\mu_\infty=\mu$; see for instance~\cite{tierney1998note}.
To analyze the convergence of $\mu_{k}$ towards $\mu_\infty$, we consider the linear operator $f\mapsto Pf$ on $L^2_{\mu_\infty}$ defined by
$$
 Pf(x) = \int_{y\in\R^d} f(y) P(x,\d y).
$$
This operator has many interesting properties that are detailed in~\cite[Chapter 12]{levin2017markov}.
First, together with the property $\mu_\infty=\mu_\infty P$, Jensen's inequality allows us to write $\int (Pf)^2 \d\mu_\infty \leq \int P(f^2) \d\mu_\infty =\int f^2 \d\mu_\infty $ so that $f\mapsto Pf$ is a contraction in $L^2_{\mu_\infty}$.
Second, detailed balance ensures the operator is symmetric in $L^2_{\mu_\infty}$, meaning $\int g (Pf)\d\mu_\infty = \int (Pg)f\d\mu_\infty$.
Third, since $P(\cdot|x)$ is a probability measure for all $x$, we have $P1=1$.
Finally, denoting by $f_k(x)=\frac{\mu_k(x)}{\mu_\infty(x)}$ the density of $\mu_k$ with respect to $\mu_\infty$, the detailed balance (again) permits us to write
$$
 Pf_{k}(x)
 = \int_{y\in\R^d} \mu_{k}(y)\frac{P(x,\d y)}{\mu_\infty(y)}
 = \int_{y\in\R^d} \mu_{k}(y)\frac{P(y,\d x)\d y}{\mu_\infty(x)\d x}
 = \frac{\mu_{k+1}(\d x)}{\mu_\infty(x)\d x}
 = f_{k+1}(x).
$$
To analyze the convergence of $\mu_{k}$ towards $\mu_\infty$, we consider the chi-square divergence $\chi^2(\mu_{k+1} || \mu_\infty) = \Var_\mu(\mu_{k+1}/\mu_\infty) $ of $\mu_{k+1}$ from $\mu_\infty$. Because $P(f_{k} -1)= f_{k+1} -1$, we can write
\begin{align*}
 \chi^2(\mu_{k+1} || \mu_\infty)
 &= \int \left( P( f_{k} -1) \right)^2 \d\mu_\infty \\
 &= \left( \frac{\int \left( P(f_{k} -1) \right)^2 \d\mu_\infty}{\int \left( f_{k} -1 \right)^2 \d\mu_\infty} \right) \chi^2(\mu_{k} || \mu_\infty)  \\
 &\leq \left( \sup_{\substack{ f \in L^2_{\mu_\infty} \text{ s.t.} \\ \int f\d \mu_\infty=0 }}
  \frac{\int \left( Pf \right)^2 \d\mu_\infty}{\int f^2 \d\mu_\infty}
 \right) \chi^2(\mu_{k} || \mu_\infty)  \\
 &= (1-\mathrm{Gap}(P))^2 \chi^2(\mu_{k} || \mu_\infty) ,
\end{align*}
hence the geometric convergence $\chi^2(\mu_{k} || \mu_\infty) \leq (1-\mathrm{Gap}(P))^{2k} \chi^2(\mu_{0} || \mu_\infty)$. Here, $\mathrm{Gap}(P)$ denotes the spectral gap of $P$ which is defined by $\mathrm{Gap}(P) = 1-\lambda_2$ where $\lambda_2$ is the 2nd largest eigenvalue of $P$ seen as an operator in $L^2_{\mu_\infty}$ (the largest eigenvalue begin $\lambda_1=1$).
An equivalent expression for the spectral gap is
$$
\mathrm{Gap}(P)=\inf_{ f \in L^2_{\mu_\infty} } \frac{\mathcal{E}(f,f)}{\Var_{\mu_\infty}(f)},
$$
where $\mathcal{E}(f,f) $ is the Dirichlet form associated with $P$ defined by
\begin{align}
 \mathcal{E}(f,f)
 &:= \int f^2- f(P f) \d\mu_\infty \nonumber\\
 &= \frac{1}{2}\int f^2-2 f(P  f)+ P (f^2) \d\mu_\infty  \nonumber\\
 &= \frac{1}{2}\int\int \Big(f(x)-f(y)\Big)^2 P (x,\d y) \mu_\infty (x)\d x . \label{eq:Dirichlet}
\end{align}
We show in Section \ref{sec:Ulam} how to numerically estimate $\mathrm{Gap}(P)$ in dimension $d=1$.
\begin{remark}[Spectral Gap and Poincar\'{e} constant]
 The above expression \eqref{eq:Dirichlet} permits to link the spectral gap with the Poincar\'{e} constant.
 Indeed, a Taylor expansion on $f$ permits to write
 $ ( f(x)-f(y) )^2  = ( \nabla f(x)^\top(y-x) )^2 + \mathcal{O}(\|x-y\|^{3})  $ so that
 \begin{align*}
 \mathcal{E}(f,f)
 &= \frac{1}{2}\int  \nabla f(x)^\top \Sigma(x) \nabla f(x) \mu_\infty (x)\d x +  \mathcal{O}(\delta) ,
\end{align*}
where $\Sigma(x) = \int (y-x)(y-x)^\top P(x,\d y)$ and $\delta =  \int\int \|x-y\|^{3}  P(x,\d y)\mu_\infty (x)\d x$.
 For instance, for the unadjusted algorithm \eqref{eq:Kernel_MCMC_unadjusted} with proposal density
 \begin{equation}\label{eq:GenericGaussianProposal}
  q_{\Delta t}(\cdot|x) = \mathcal{N}(m_{\Delta t}(x), \Sigma_{\Delta t}(x))
  \quad\text{where}\quad
  \begin{cases}
   m_{\Delta t}(x) = x + \Delta t b(x)+ \mathcal{O}(\Delta t^{3/2}) \\
   \Sigma_{\Delta t}(x) = 2\Delta t W(x) + \mathcal{O}(\Delta t^{3/2}) ,
  \end{cases}
 \end{equation}
 with $b:\R^d\rightarrow\R^d$ and $W:\R^d\rightarrow\mathcal{S}_{+}^d$ two arbitrary functions, we have
 $\Sigma(x) = 2\Delta t W(x)+\mathcal{O}(\Delta t^{3/2}) $ and $
   \delta = \mathcal{O}(\Delta t^{3/2})$.
 Thus, if $\mu_\infty\overset{d}{\rightarrow}\mu$ as $\Delta t\rightarrow0$ (which is the case when $b(x) = \mathrm{div}W(x) - W(x)\nabla V(x)$, see \emph{e.g.} \cite[Theorem 5.1]{mattingly2010convergence}), we deduce
 \begin{equation}\label{eq:Dirichlet_Poincare}
  \mathcal{E}(f,f) = \Delta t \E_\mu[ \| \nabla f \|_{W}^2 ] +  \mathcal{O}(\Delta t^{3/2}).
 \end{equation}
 A similar expression can be obtain with the Metropolis adjusted algorithm \eqref{eq:Kernel_MCMC_adjusted} with the same proposal density $q_{\Delta t}(\cdot|\cdot)$.
 Let us emphasis that the constants hidden in the term $\mathcal{O}(\Delta t^{3/2})$ depend on $f$: if we had the stronger statement
 $\mathcal{E}(f,f) = \Delta t \E_\mu[ \| \nabla f \|_{W}^2 ] +  \Delta t^{3/2} \mathcal{R}(f)$ with $\mathcal{R}(f)\leq C(1+ \E_\mu[ \| \nabla f \|_{W}^2 ] )$ for all $f\in L^2_\mu$, we would be able to conclude that
 \begin{equation}\label{eq:GAP_VS_Poincare}
  \mathrm{Gap}(P) = \frac{\Delta t}{ C(\mu,W)} + \mathcal{O}(\Delta t^{3/2}) ,
 \end{equation}
 where $C(\mu,W)$ is the Poincar\'{e} constant such that $\Var_\mu(f) \leq C(\mu,W)  \E_\mu[\| \nabla f\|_{W}^2]$ for all smooth function $f$.
 A precise derivation of \eqref{eq:GAP_VS_Poincare} is beyond the scope of this paper and is left for future work.

\end{remark}

\begin{remark}
The spectral gap relates to the integrated autocorrelation time (IACT) as
$$
 \mathrm{Gap}(P) = \frac{2}{1+\sup_{f\in L^2_{\mu_{\infty}}}  \mathrm{IACT}(f)} ,
$$
where
$$
 \mathrm{IACT}(f) = 1+2\sum_{k=1}^\infty \frac{\Cov( f(x_0),f(x_k) )}{\Var(f(x_0))} ,
$$
see for instance \cite{geyer1992practical}.
In principle, a sample-based estimator of the IACT allows one to approximate $\mathrm{IACT}(f) \approx \widehat{\mathrm{IACT}}(f)$ for any (fixed) function $f\in L^2_{\mu_{\infty}}$.
However, estimating $\sup_{f\in L^2_{\mu_{\infty}}}  \mathrm{IACT}(f)$ is more challenging, since one must estimate $\mathrm{IACT}(f)$ simultaneously for several functions $f$.
For instance, consider the finite-dimensional subspace $V_N =\text{span}\{ \mathbf{1}_{\Omega_i} ,\hdots, \mathbf{1}_{\Omega_N} \}$ of piecewise constant functions where $\Omega_1,\hdots,\Omega_N$ is a partition of $\R^d$. Accurate estimation of $\sup_{f\in V_N}  \mathrm{IACT}(f)$ requires a sufficient number of samples in each subdomain $\Omega_i$, which becomes impractical when $N$ is large. We show in Section \ref{sec:Ulam} an alternative approach for computing $\mathrm{Gap}(P)$ directly in dimension $d=1$.
\end{remark}

\section{Numerical Experiments in Dimension One}\label{sec:NumericalResults}

\begin{table}[]
\centering
\small
\begin{tabular}{|l|l|l|}
\hline
 \textbf{Acronym} & \textbf{Candidate} $x^\dagger\sim q(\cdot|x_k)$ (with $\xi_k\overset{\text{iid}}{\sim}\mathcal{N}(0,\Sigma)$) & \textbf{Parameters} \\ \hline
 IS & $x^\dagger =  m+\xi_k $  &   $\Sigma\succ0, m\in\R^d$ \\ \hline
 RW& $x^\dagger = x_k + \sqrt{2\Delta t} \xi_k $  &   $\Sigma\succ0, \Delta t >0$ \\ \hline
 MALA (ULA)        & $x^\dagger = x_k  + \Delta t\Sigma\nabla\ln\mu(x_k) + \sqrt{2\Delta t} \xi_k$ & $\Sigma\succ0,\Delta t  > 0$ \\ \hline
 pCN   & $x^\dagger = \theta x_k + (1-\theta) m  + \sqrt{1-\theta^2} \xi_k$,  & $\Sigma\succ0,m\in\R^d,\Delta t>0$ \\ & where $\theta=\exp(-\Delta t) $& \\ \hline
 GRiLS  & $x^\dagger = \theta_k x_k + (1-\theta_k) m  + \sqrt{1-\theta_k^2} \xi_k $, & $\Sigma\succ0,m\in\R^d,\Delta t > 0$ \\
 (U-GRiLS) &where $\theta_k = \exp(-\Delta t \tfrac{\nu(x_k)}{\mu(x_k)} )$ and $\nu=\mathcal{N}(m,\Sigma)$ & \\ \hline
\end{tabular}
\caption{\underline{Metropolis-adjusted MCMC algorithms}: Independence Sampler (\textbf{IS}), Random walk (\textbf{RW}),  Metropolis Adjusted Langevin Algorithm (\textbf{MALA}), preconditionned Crank-Nicolson (\textbf{pCN}) and Gradient-free Riemannian Langevin Sampler (\textbf{GRiLS}).
\underline{Unadjusted algorithms}: Unadjusted Langevin Algorithm (\textbf{ULA}) and unadjusted GRiLS (\textbf{U-GRiLS}).}
\label{tab:std_proposal}
\end{table}

We compare numerically the the MCMC algorithms listed in Table \ref{tab:std_proposal} on two target measures in dimension one\footnote{The implementation to reproduce the numerical results can be found at:~\url{https://gitlab.inria.fr/ozahm/grils_dim1}.}. We consider a Gaussian mixture $\mu_h$ and a smoothed piecewise constant density $\mu_\varepsilon$ defined respectively by
\begin{align*}
 \mu_h(x) &\propto \mathbf{1}_{[-10,10]}(x) \left( \exp\left( -\frac{(x-h/2)^2}{2}\right) + \exp\left( -\frac{(x+h/2)^2}{2}\right) \right)\\
 \mu_\varepsilon(x) &\propto \mathbf{1}_{[-2,2]}(x) \sum_{i=1}^4 \alpha_i \left( \sigma_\varepsilon(x-r_i) - \sigma_\varepsilon(x-r_{i+1}) \right) .
\end{align*}
Here $\sigma_\varepsilon(t) = \frac{1}{1+\exp(-t/\varepsilon)}$ is a sigmoid function (with the convention $\sigma_{\varepsilon=0}(t) = 1+\tfrac{1}{2}\text{sign}(t)$), and $\alpha=(0.5,4,1,3)$ and
$r=(-2,1,0,1,2)$. As illustrated on Figure \ref{fig:TargetDensities_d1}, the parameters $h$ and $\varepsilon$ control the difficulty of the problem: large values of $h\gg1$ enforce stronger multimodality in the Gaussian mixture and small values of $\varepsilon\ll 1$ make $\mu_\varepsilon$ closer to a piecewise-constant measure. In these regimes, standard Langevin-based algorithms are expected to perform poorly.

In these experiments, the mean $m$ and the covariance $\Sigma$ are computed analytically (see Section \ref{sec:HigherDimensional} for the use of Algorithm \ref{alg:BE-GRiLS} to estimate both $m$ and $\Sigma$ from interacting particles).

\begin{figure}
    \centering
    \begin{subfigure}[b]{0.48\textwidth}
        \includegraphics[width=\textwidth]{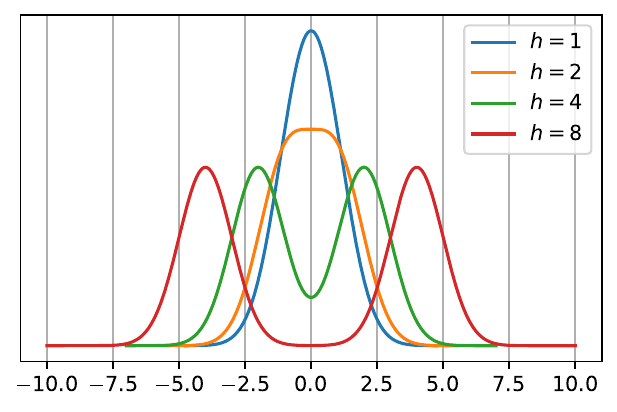}
        \caption{Mixture of two Gaussian $\mu_h$}
    \end{subfigure}
    \hfill
    \begin{subfigure}[b]{0.48\textwidth}
        \includegraphics[width=\textwidth]{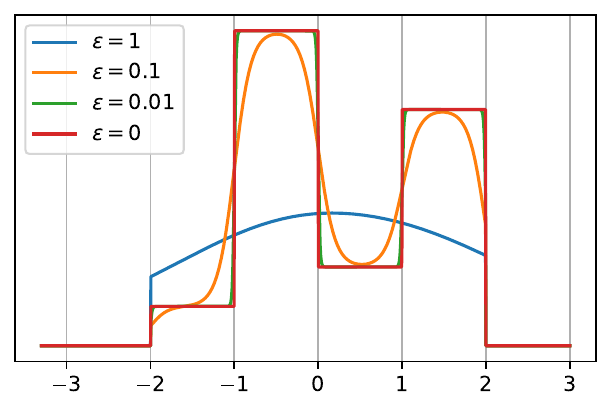}
        \caption{Smoothed piecewise constant density $\mu_\varepsilon$}
    \end{subfigure}
    \caption{Two target densities in dimension $d=1$. Left: Mixture of two Gaussian $\mu_h = \frac{1}{2}\mathcal{N}(-h/2,1) + \frac{1}{2}\mathcal{N}(+h/2,1)$}
    \label{fig:TargetDensities_d1}
\end{figure}

\subsection{Ulam's Method}\label{sec:Ulam}

In order to numerically estimate $\mu_\infty$ (for unadjusted algorithms), the expected acceptance rate (for adjusted algorithms) and $\mathrm{Gap}(P)$, we employ a piecewise constant discretization scheme.
Let $[a,b]=\text{supp}(\mu)$ be the support of the target measure and let $c_i = a + (b-a) \frac{i-1}{N-1}$ for $1\leq i \leq N$. In all our experiments, we take $N=2000$. We consider the partition $\R = \cup_{i=1}^N \Omega_i$ where $\Omega_{i}=[ \tfrac{c_{i-1}+c_i}{2} ,  \tfrac{c_{1}+c_{i+1}}{2} ]$ with the convention $c_0=-\infty$ and $c_{N+1}=+\infty$.
We then assemble the matrix
\begin{align*}
 Q_{ij}^N = \int_{\Omega_j} q(y|c_i)\d y ,
\end{align*}
by using the closed form expression of the CDF of the Gaussian density $q(\cdot|c_i)$.
By construction, $Q^N=(Q^N_{ij})$ is a stochastic matrix (all its entries are nonnegative and each row sums to one) which approximates the \textbf{unadjusted algorithm}.
By the Frobenius-Perron theorem, there is a nonnegative vector $v\in\R^N$ whose components sum to one, such that $v=(Q^N)^\top v$, and the corresponding density function
$$
 \mu_\infty^N(x) = \sum_{i=1}^N  \frac{v_i}{ |\Omega_i| }  \mathbf{1}_{\Omega_i}(x) ,
$$
is a piecewise constant approximation to $\mu_\infty$ called the Ulam approximation \cite{ulam1960collection}.
Although this method is simple, its convergence is usually slow since it only uses piecewise constant functions, typically $\|\mu_\infty - \mu_{\infty}^N\|_{L^1} = \mathcal{O}( N^{-1}\ln(N))$, see \cite{froyland1998approximating}.
In addition, the spectral gap $\mathrm{Gap}(Q^N) = 1-\lambda_2^N$ can be computed by finding the second largest eigenvalue $\lambda_2^N$ of $(Q^N)^\top w = \lambda_2^N w $.

The stochastic matrix representing the \textbf{Metropolis adjusted algorithm} targeting $\mu^N(x) = \sum_{i=1}^N \frac{\mu(c_i)}{|\Omega_i|}\mathbf{1}_{\Omega_i}(x)$ is assembled via
$$
 P^N_{ij} = \begin{cases}
             \min\{ \mu(c_i) Q_{ij}^N , \mu(c_j) Q_{ji}^N \} \mu(c_i)^{-1}  &  i\neq j ,\\
             1-\sum_{k\neq i} P_{ik}^N & i=j .
            \end{cases}
$$
The spectral gap $\mathrm{Gap}(P^N) = 1-\lambda_2^N$ is also be computed by finding the second largest eigenvalue $\lambda_2^N$ of $(P^N)^\top w = \lambda_2^N w $, and the expected acceptance rate is given by
$$
 \E[\alpha^N] = \sum_{i,j=1}^N \min\{ \mu(c_i) Q_{ij}^N , \mu(c_j) Q_{ji}^N \} .
$$

\subsection{Unadjusted Algorithms}

We compare here ULA and U-GRiLS. Because both algorithms are based on a time-discretization of a Langevin dynamic, their stationary measure $\mu_\infty^N$ converges to $\mu^N(x)= \sum_{i=1}^N \frac{\mu(c_i)}{|\Omega_i|}\mathbf{1}_{\Omega_i}(x)$ as $\chi^2( \mu^N || \mu_{\infty}^N) = \mathcal{O}(\sqrt{\Delta t})$, see \cite[Theorem 5.1]{mattingly2010convergence}. In addition, their spectral gap decreases with $\Delta t$ as $\mathrm{Gap}(Q^N)=\mathcal{O}(\Delta t)$ according to \eqref{eq:GAP_VS_Poincare}.
Thus we expect
\begin{equation}\label{eq:GAP_BIAS_unadjusted}
 \mathrm{Gap}(Q^N) = \mathcal{O}\Big( \chi^2( \mu^N || \mu_{\infty}^N)^2 \Big) .
\end{equation}
Figure \ref{fig:C_vs_bias} represents the spectral gap as a function of the bias, both quantities being computed for a range of $\Delta t\in [10^{-4},10^0]$.
Note that relation \eqref{eq:GAP_BIAS_unadjusted} is observed for U-GRiLS on $\mu_h$ for $h=8$ and on $\mu_\varepsilon$ with $\varepsilon\in\{0.01,0.001,0\}$.
In addition, in these plots, a larger spectral gap $\mathrm{Gap}(Q^N)$ indicates faster convergence toward the stationary measure $\mu_\infty^N$, assuming a comparable bias. Overall, we observe that GRiLS often outperforms ULA.

\begin{figure}
    \centering
    \includegraphics[width=\textwidth]{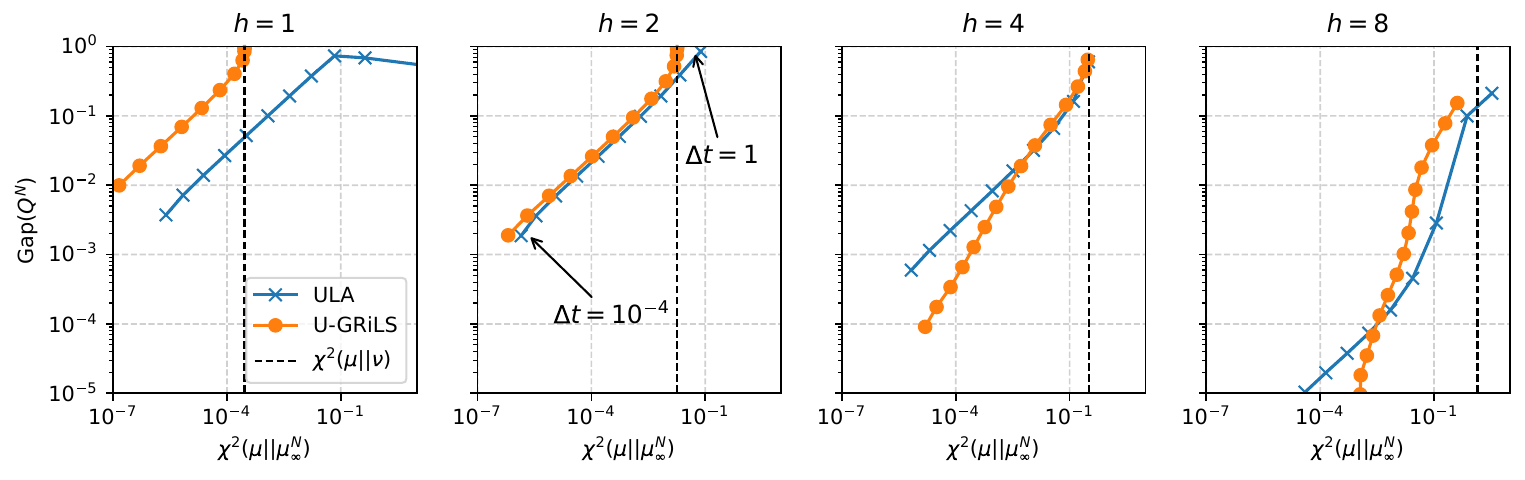}
    \includegraphics[width=\textwidth]{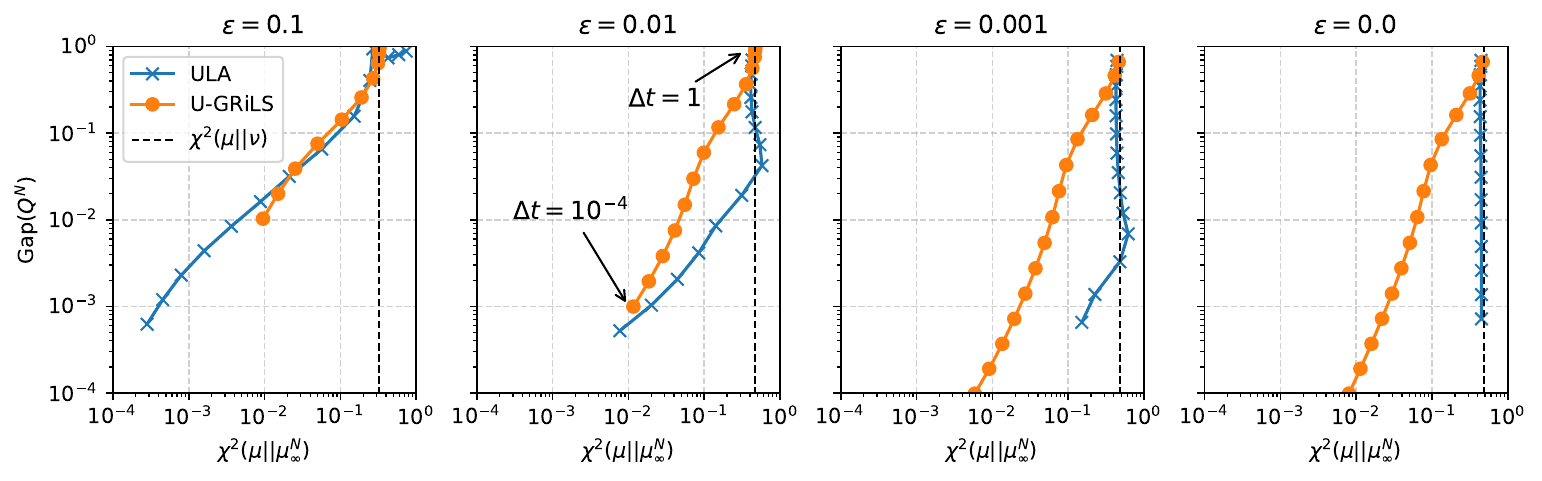}
    \caption{Unadjusted algorithms: spectral gap V.S. bias computed for different values of $\Delta t \in[10^{-4},10^0]$. Top row: Gaussian mixture $\mu_h$ for $h\in\{1,2,4,8\}$.
     Bottom row: smoothed piecewise constant density $\mu_\varepsilon$ for $\varepsilon\in\{10^{-1},10^{-2},10^{-3},0\}$. }
    \label{fig:C_vs_bias}
\end{figure}

Note that the worth performance of ULA is for the smoothed piecewise constant density $\mu_{\varepsilon=0}$ for which the bias do not decrease. This is not surprinsing, since $\nabla \ln \mu_{(\varepsilon=0)}(x) = 0$ almost surely. In contrast, the stationary measure of U-GRiLS is consistently approaching $\mu_{(\varepsilon=0)}$ as $\Delta t \rightarrow 0$, as shown on Figure \ref{fig:StepMixture_Convergence_UGRiLS}.

\begin{figure}
    \centering
    \begin{subfigure}[b]{0.55\textwidth}
        \includegraphics[width=\textwidth]{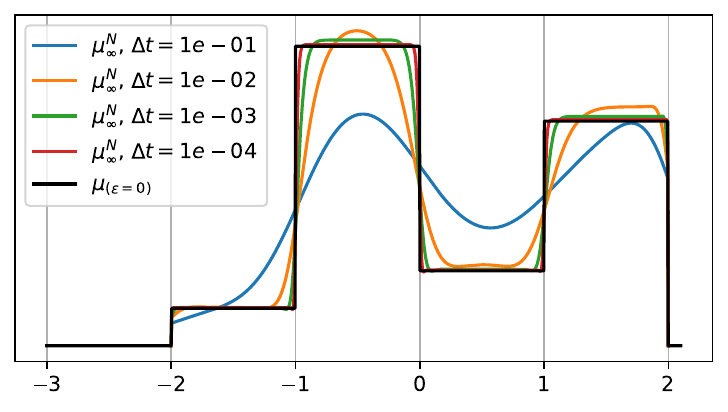}
        \caption{Stationary measures $\mu_\infty^N$}
    \end{subfigure}
    \hfill
    \begin{subfigure}[b]{0.4\textwidth}
        \centering
        \begin{minipage}[c][5cm][c]{\linewidth} %
            \centering \small
            \begin{tabular}{|c|c|}
              \hline
              $\Delta t$ & $\chi^2(\mu_{\varepsilon=0}||\mu_\infty^N)$ \\ \hline
              $10^{-1}$ & $1.44 \times 10^{-1}$ \\
              $10^{-2}$ & $5.21 \times 10^{-2}$ \\
              $10^{-3}$ & $1.99 \times 10^{-2}$ \\
              $10^{-4}$ & $6.55 \times 10^{-3}$ \\ \hline
          \end{tabular}
        \end{minipage}
        \caption{Bias}
    \end{subfigure}
    \caption{U-GRiLS targeting the piecewise constant density $\mu_{(\varepsilon=0)}$: stationary measures $\mu_\infty^N$ and bias $\chi^2(\mu||\mu_\infty^N)$ for different $\Delta t\in[10^{-1},10^{-4}]$.}
    \label{fig:StepMixture_Convergence_UGRiLS}
\end{figure}

\subsection{Metropolis Adjusted Algorithms}

Figure \ref{fig:C_vs_alpha} shows the performance of the Metropolis adjusted algorithms by plotting $\mathrm{Gap}(P^N)$ as a function of the rejection rate $1-\E[\alpha^N]$.
Both quantities are being computed for a range of $\Delta t\in[10^{-4},10^0]$.
For all considered algorithms, the rejection rate is known to behave like $1-\E[\alpha^N] = \mathcal{O}(\sqrt{\Delta t})$ with the expection of MALA for which $1-\E[\alpha^N] = \mathcal{O}(\Delta t^{3/2})$, see \emph{e.g.} \cite{roberts1998optimal}.
Thus we expect
$$
 \mathrm{Gap}(Q^N) ~=~
 \begin{cases}
  \mathcal{O}\left( (1-\E[\alpha^N] )^{2/3}\right) & \text{for MALA,} \\
  \mathcal{O}\left( (1-\E[\alpha^N] )^2\right)  &\text{for other algorithms.}
 \end{cases}
$$
This asymptotic behaviors are clearly observed for $\mu_h$ for $h\in\{1,2,4\}$.
Is it worth to note that, albeit being based on the discretization of a Langevin dynamic, GRiLS do not heritate from the favourable rejection rate $1-\E[\alpha^N] = \mathcal{O}(\Delta t^{3/2})$ and has a spectral gap behaving as $\mathrm{Gap}(Q^N)\sim ( 1-\E[\alpha^N] )^2$.

Among all algorithms, the independence sampler (IS) consistently yields the smallest spectral gap, making it the most effective Metropolis-adjusted algorithm for the one-dimensional problems we consider here.
This indicates that, for the Gaussian mixture $\mu_h$ (top row), the most efficient mechanism to jump across the modes is to draw a independent sample.
Note also that the expected rejection rate $1-\E[\alpha^N]$ of IS increases rapidly with $h$, corroborating formula \eqref{eq:Alpha_IS}. This means that, albeit having the largest spectral gap, IS accepts less frequently when $h\gg1$.

We also observe that GRiLS consistently performs better than pCN and, for large $\Delta t$, both GRiLS and pCN recover IS, which is consistent with their construction (remember Remark \ref{rmk:IS}).
It is worth noting that, for $h=8$, GRiLS achieves significantly largest spectral gap over a wider range of small acceptance probability.
This indicates that GRiLS is able to explore efficiently each modes (small rejection rate) while still being able to jump across modes (large spectral gap).

When considering MALA, we observe on both target densities that its performance deteriorates significantly for large $h$ and small $\varepsilon$. Note also that, for the piecewise constant density $\mu_{\varepsilon = 0}$, MALA reduces to RW.

\begin{figure}
    \centering
    \includegraphics[width=\textwidth]{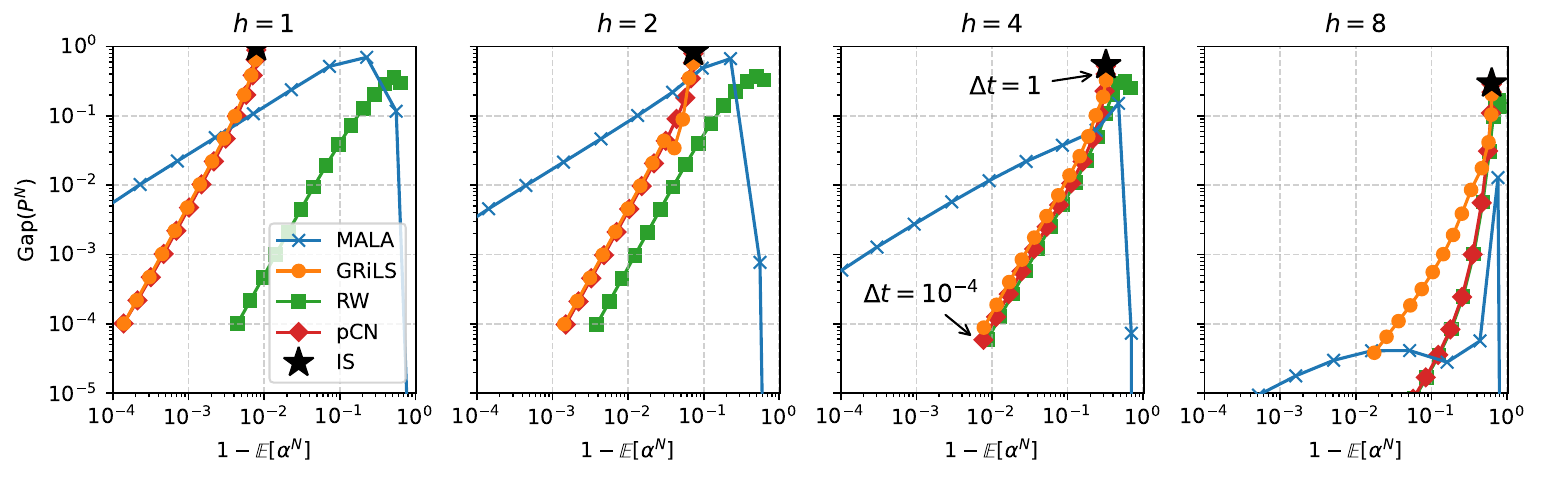}
    \includegraphics[width=\textwidth]{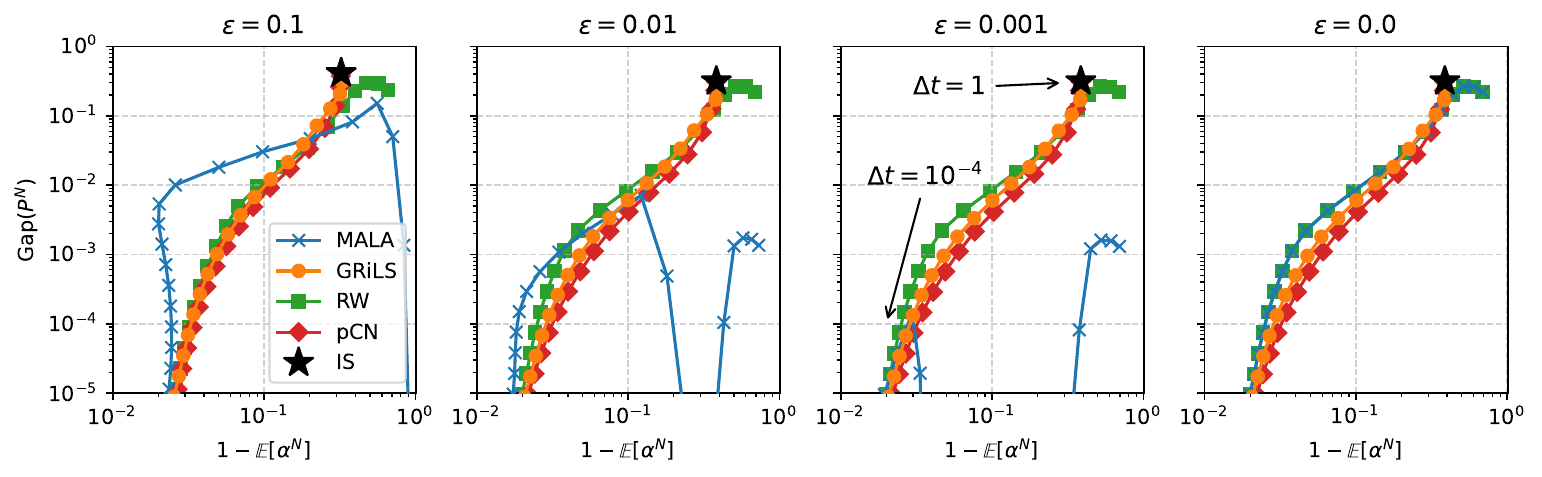}
    \caption{Adjusted algorithms: spectral gap V.S. expected rejection rate for different values of $\Delta t \in[10^{-4},10^0]$. Top row: Gaussian mixture $\mu_h$ for $h\in\{1,2,4,8\}$.
     Bottom row: smoothed piecewise constant density $\mu_\varepsilon$ for $\varepsilon\in\{10^{-1},10^{-2},10^{-3},0\}$. }
    \label{fig:C_vs_alpha}
\end{figure}

\section{Higher-Dimensional Numerical Experiments} \label{sec:HigherDimensional}

In this section, we demonstrate the performance of BE-GRiLS on multimodal densities\footnote{The implementation to reproduce the numerical results can be found at:~\url{https://github.com/baptistar/GradientFreeRiemannianLangevinSampler}.}. Subsection~\ref{sec:benchmark} considers two-dimensional benchmark densities, and  Subsection~\ref{sec:increasing_dimension} exhibits the performance with a ten-dimensional mixture density. %

\subsection{Benchmark Two-Dimensional Distributions} \label{sec:benchmark}
First, we assess the performance of BE-GRiLS on  benchmark non-Gaussian densities $\mu \in \mathcal{P}(\R^2)$ including a mixture of Gaussians, the two moons and the two rings densities defined by:
\begin{align*}
\mu(x) &\propto \frac{1}{3} \sum_{k=1}^3 \exp \left(-\frac{\|x -m_k\|^2}{\sigma^2} \right)\\
\mu(x) &\propto (1 + \exp^{4x_1/a}) \exp \left(-\frac{(\|x\| - 1)^2}{b} - \frac{(x_1 - 2)^2}{2a} \right) \\
\mu(x) &\propto \frac{1}{\|x\|} \left(-\frac{(\|x\| - 1)^2}{2\sigma_R^2} - \frac{(\|x\| - 2)^2}{2\sigma_R^2}\right), 
\end{align*}
respectively. For the mixture of Gaussians example, we set  $m_1 = [-\sqrt{3}, 1]$, $m_2 = [\sqrt{3}, 1]$ and $m_3 = [0, 2]$ so that the modes are centered at the vertices of an equilateral triangle, %
and $\sigma^2 = 0.1$. For the two moons example, we set $a = 0.08, b = 0.08$. For the two rings example, we set $\sigma_R = 0.1$.

We evaluate the block ensemble GRiLS in comparison to two gradient-free methods: adaptive Metropolis~\cite{haario2001adaptive}, which updates the proposal covariance adaptively based on the history of the Markov chain, and the Affine Independent Ensemble Sampler (AIES, \cite{goodman2010ensemble}) which, similarly to {BE-GRiLS}, is based on an ensemble of particles.
We also compare to the gold-standard Metropolis Adjusted Langevin Algorithm (MALA), which %
has access to gradients from the target density and whose covariance is also adapted based on the history of the chain. For AIES and BE-GRiLS, %
we initialize the samplers using $N=10$ particles drawn from a standard Gaussian density, while the Adaptive Metropolis and Adaptive MALA algorithms are initialized at the MAP of the target density with an identity covariance matrix for the Gaussian proposal. We run BE-GRiLS with $P=2$ blocks of equal size. Each method is run for 21,000 steps and we consider 1,000 steps as burn-in. We select $\Delta t = 0.3$ in these experiments to minimize the probability of missing modes due to initialization of the chain.

Figures~\ref{fig:mog}-\ref{fig:two_rings} plot the sample history relative to the contours of each target density and the trace plots for the two coordinates. Overall, we observe that only BE-GRiLS consistently captures all modes of the target density. Moreover, the trace plots demonstrate fast mixing between the modes as a result of the preconditioning. We note that while adaptive Metropolis can adapt the covariance structure to capture some of these two-dimensional densities with a sufficiently large step-size, the next subsection will demonstrate that it can generally miss modes in higher-dimensional problems.

\begin{figure}
    \centering
    \includegraphics[width=\textwidth]{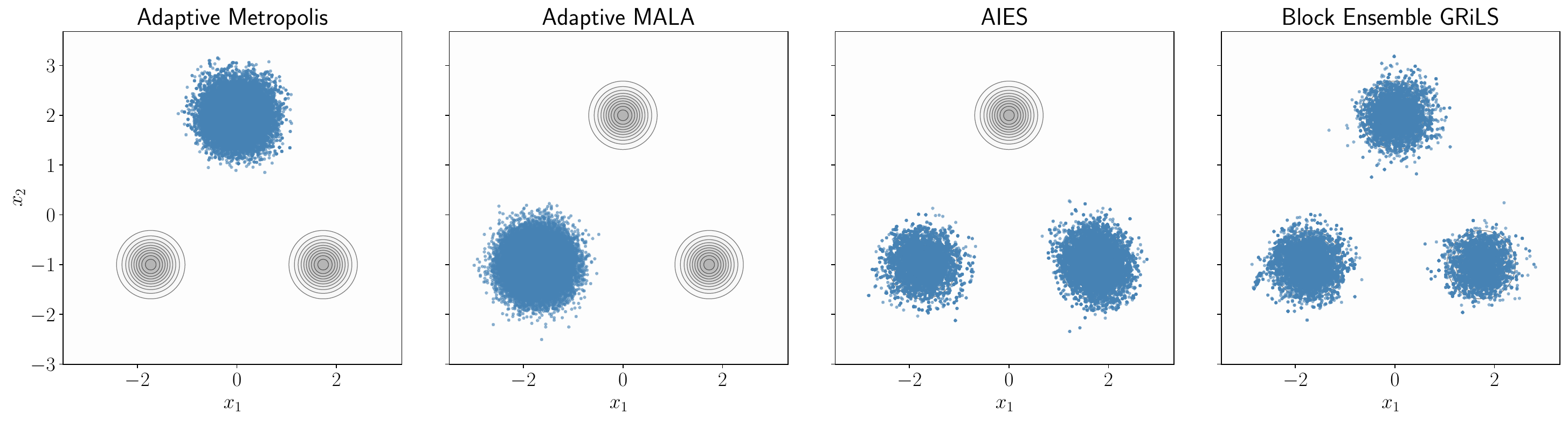}
    \includegraphics[width=\textwidth]{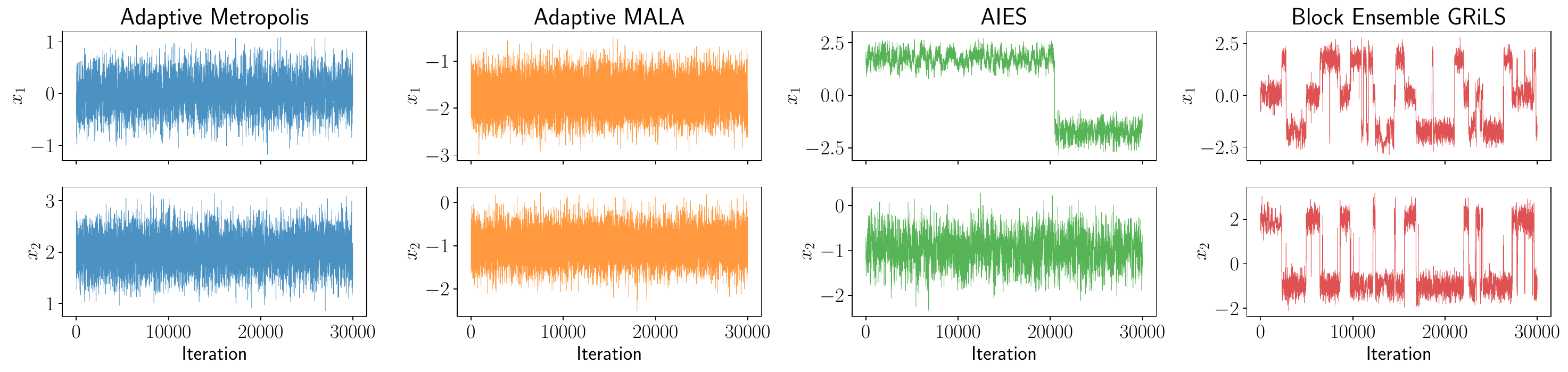}
    \caption{Samples and trace plots for the mixture of Gaussians density across four algorithms. The BE-GRiLS algorithm shows fast mixing behavior across and coverage of all three modes.~\label{fig:mog}}
\end{figure}

\begin{figure}[!ht]
    \centering
    \includegraphics[width=\textwidth]{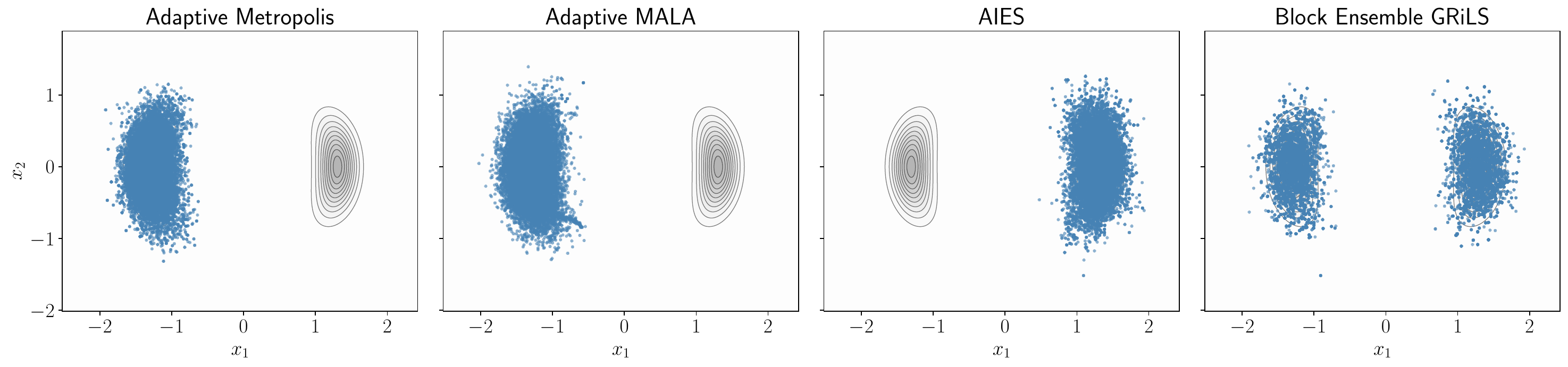}
    \includegraphics[width=\textwidth]{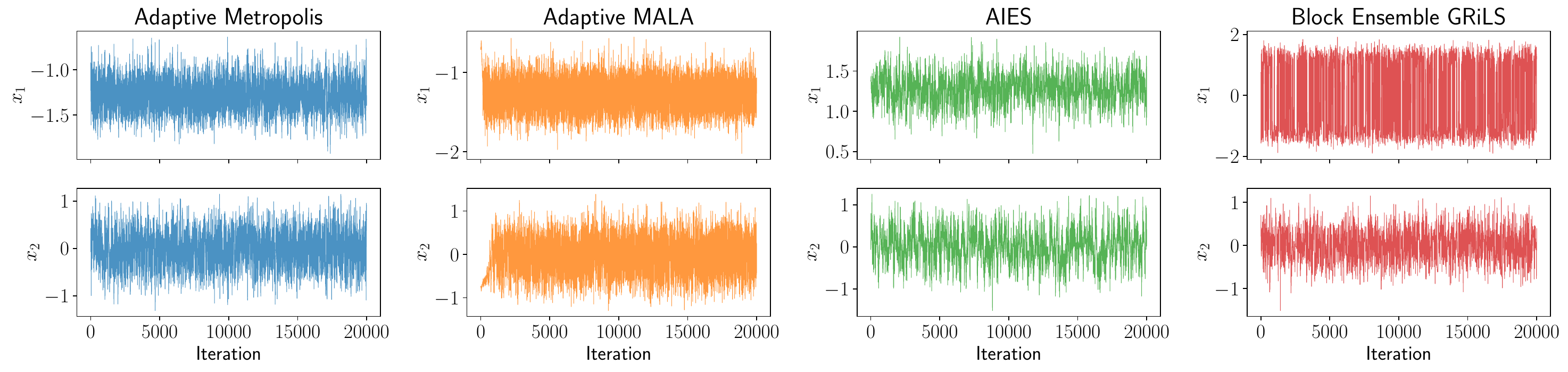}
    \caption{Samples and trace plots for the two moons density across four algorithms.\label{fig:dualmoons}}
\end{figure}

\begin{figure}[!ht]
    \centering
    \includegraphics[width=\textwidth]{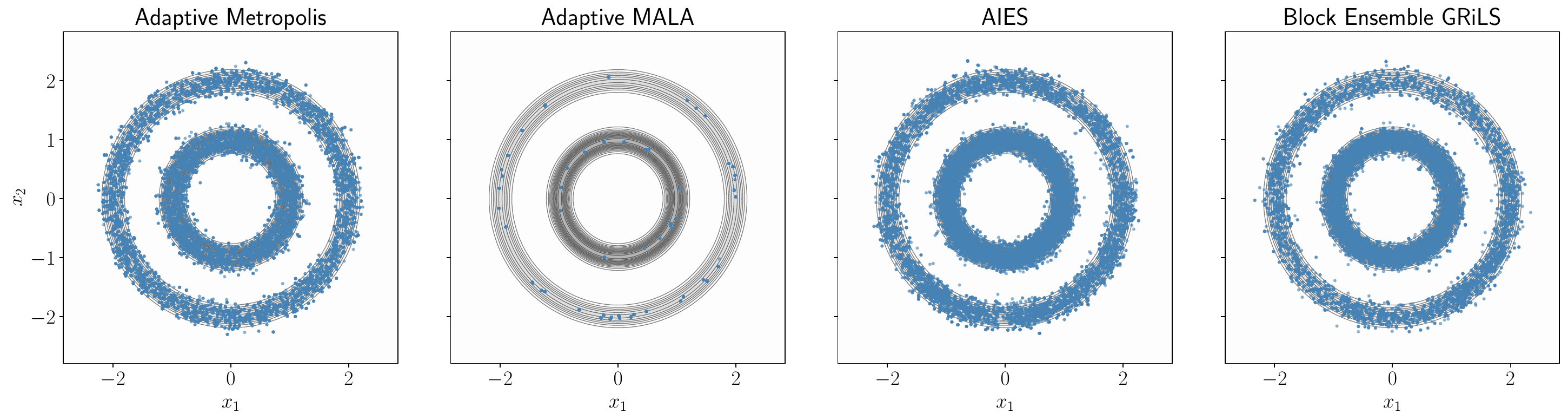}
    \includegraphics[width=\textwidth]{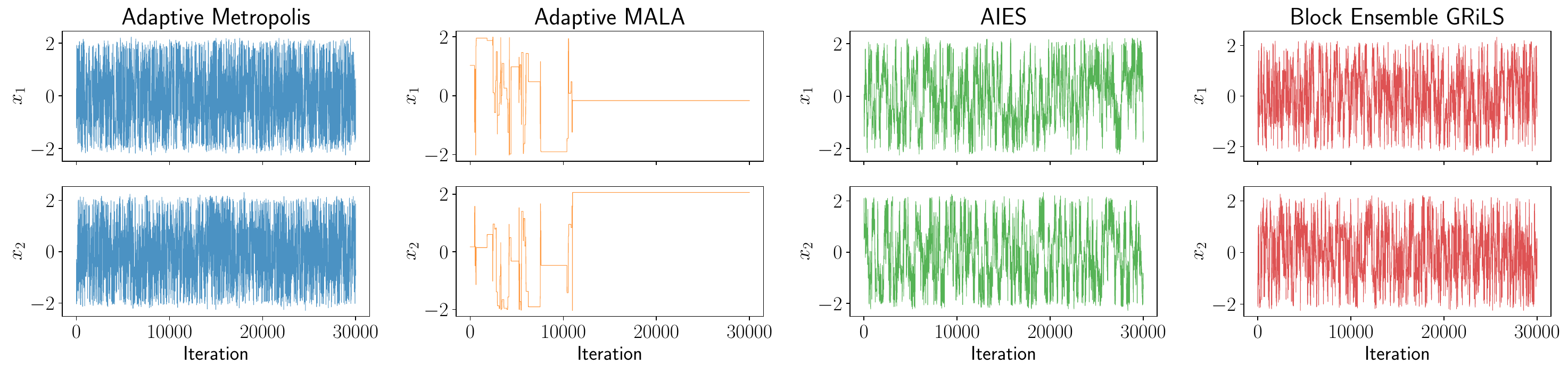}
    \caption{Samples and trace plots for the two rings density across four algorithms.\label{fig:two_rings}}
\end{figure}

\subsection{Mixture of Gaussians in Ten Dimension} \label{sec:increasing_dimension}

Lastly, we assess the performance of BE-GRiLS on approximating the first two moments of a $d=10$ dimensional Gaussian mixture target density with three equal-weight components. The first two coordinates match the target density of the Gaussian mixture in Subsection~\ref{sec:benchmark}, while the
remaining $d-2$ coordinates follow the law of independent standard normal variables. %
We compare the algorithm's performance to adaptive random-walk Metropolis (AM), MALA with an adaptive covariance, and the affine-invariant ensemble sampler (AIES).

For each algorithm we run $K = 10^5$ total iterations with the first $10^3$ steps discarded as burn-in. We initialize the single-chain methods (AM and MALA) from the target's MAP and the ensemble methods (AIES and BE-GRiLS) from $N = 40$ walkers drawn from
$\mathcal N(\mu_{\rm MAP}, \Sigma_\pi)$, where $\Sigma_\pi$ is the true global covariance in order to maximize the success of all methods. The step sizes $\Delta t$ (or the AIES stretch scale factor) are independently tuned for each algorithm by a pilot run of length $10^3$ steps by maximizing the expected squared jump distance per dimension,
$\mathrm{ESJD}/d = \tfrac1d\,\mathbb E\|x_{t+1}-x_t\|_2^2$. This criterion avoids select an overly small step, which would miss certain modes of the multimodal density~\cite{pasarica2010adaptively}. %

\begin{figure}[!ht]
    \centering
    \includegraphics[width=\textwidth]{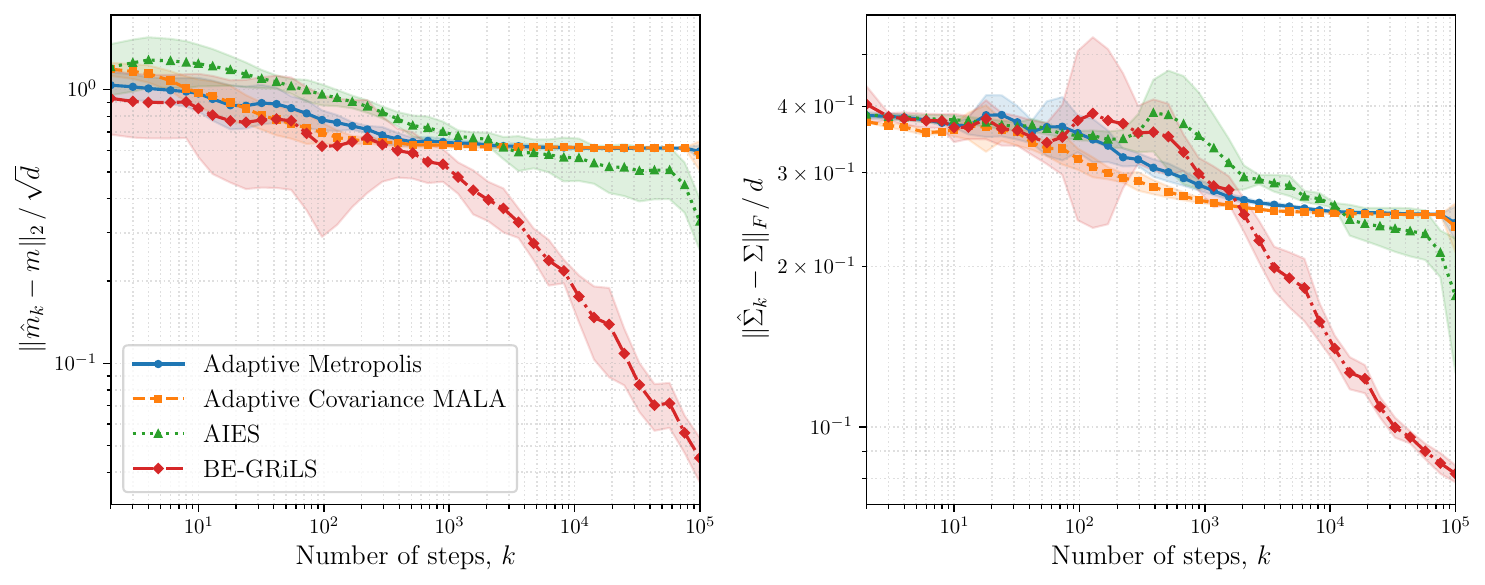}
    \caption{Convergence of the empirical mean (left) and covariance (right) for the $d=10$ dimensonal Gaussian mixture across four algorithms.\label{fig:converence_moments}}
\end{figure}

For each iteration $k$, we compute the errors between the running empirical mean and covariance $(\hat{m}_k,\hat\Sigma_k)$ and the true mean and covariance $(m,\Sigma)$ of the target density. The errors normalized by an appropriate dimension scaling are given by:
\[
\frac{\lVert \hat{m}_k - m\rVert_2}{\sqrt d}, \qquad
\frac{\lVert \hat \Sigma_k - \Sigma\rVert_F}{d}.
\]
Figure~\ref{fig:converence_moments} plots the mean and covariance error as a function of increasing iterations for the four algorithms. The results are averaged
over $5$ independent repetitions with the mean reported along with one standard deviation in the figure. %
Across the tested settings, BE-GRiLS generally attains the lowest normalized mean and covariance errors at a given sample budget than the single-chain AM and MALA baselines or the ensemble method AIES. As in Figure~\ref{fig:mog}, we observe fast mixing of BE-GRiLS across the three modes of the density, unlike the alternative methods which are more likely to become trapped within a single mixture component and hence result in lower global accuracy.

\section{Discussions and Conclusions}

In this work, we have introduced GRiLS, a gradient-free sampler based on the time discretization of a Riemannian Langevin dynamics, and demonstrated its connections to several established MCMC methods. Specifically, GRiLS unifies a broad family of proposals: it recovers an independence sampler (IS) in the limit $\Delta t\rightarrow\infty$, it simplifies to preconditioned Crank-Nicolson (pCN) scheme when the parameter $\theta_k=\theta$ is held constant, and shares the structural character of MALA/ULA through its Langevin foundation as $\Delta t\rightarrow 0$.
Moreover, the ensemble variant BE-GRiLS has a close resemblance to Consensus-Based Sampling (CBS), with each proposal combining a contraction toward the ensemble mean with a Gaussian perturbation. This unifying perspective suggests that GRiLS provides a principled and flexible framework that interpolates between and generalizes several state-of-the-art MCMC proposals.

Several directions for future work remain open. An important question in practice concerns the systematic selection of the step size $\Delta t$: as in gradient-based optimization, this choice is critical for algorithmic performance, and principled heuristics or adaptive strategies remain to be developed. On the theoretical side, it is natural to ask whether the classical optimal scaling analysis for MALA as in \cite{roberts1998optimal} can be adapted to GRiLS, which, despite being gradient-free, is fundamentally a Langevin sampler. Also, following \cite{liu2026spectral,andrieu2024explicit}, a convergence analysis via Cheeger inequalities, offers another promising theoretical avenue. Beyond the Gaussian setting, generalizing GRiLS to non-Gaussian reference measures $\nu$ would broaden its applicability, as outlined in Remark \ref{rmk:nonGaussianReference}. Improving the accuracy of the time integration through higher-order quadrature schemes represents a further refinement of the method.
In addition, as done in CBS \cite{carrillo2022consensus}, a rigorous analysis of the mean-field limit of BE-GRiLS as $N\rightarrow\infty$ would provide a deeper understanding of the ensemble dynamics in a non-stationary regime, that is, when the target mean and covariance are poorly estimated by the ensemble.

Finally, a critical question that this paper does not fully resolve concerns the robustness of GRiLS in high dimensions $d\gg 1$. As shown in Equation \eqref{eq:GRiLS_AcceptanceRate}, the acceptance rate of the Metropolis-corrected GRiLS algorithm degenerates as the dimension grows, revealing an intrinsic limitation of proposals with location-dependent covariance. A promising remedy, to be pursued in future work, is to confine the spatial variations of the proposal covariance to a low-dimensional subspace, in the spirit of \cite{lelievre2025improving}, thereby mitigating this adverse effect of dimensionality.

\paragraph{Acknowledgments.} The authors appreciate discussion of some of the results with Björn Sprungk and Simon Barthelm\'{e} and Bryan Toto.

\appendix

\section{Proof for \eqref{eq:GaussianMixture_1d} and \eqref{eq:GaussianMixture_dd}}\label{proof:GaussianMixture}

Inequality \eqref{eq:GaussianMixture_1d} can be deduce from \eqref{eq:GaussianMixture_dd} by letting $d=1$, $N=2$, $m_i=\pm h$. To proof \eqref{eq:GaussianMixture_dd}, we use the following proposition.

\begin{proposition}\label{prop:GaussianMixture}
 Let $\mu(x) = \sum_{i=1}^N \omega_i \mu_i(x)$ be a $d$-dimensional Gaussian mixture, where $\mu_i = \mathcal{N}(m_i,\Sigma_i)$, $\Sigma_i\succ0$, for some weights $\omega_i\geq0$ that sum to one. Its mean and covariance are given by
 $$
  m=\sum_{i=1}^N \omega_i m_i , \quad\text{and}\quad \Cov_\mu = \sum_{i=1}^N \omega_i \left( \Sigma_i + (m_i-m)(m_i-m)^\top \right) .
 $$
 For any matrix $\Sigma$ such that $\Sigma \succ \Sigma_i$ for all $i\leq N$, the Gaussian measure $\nu=\mathcal{N}(m,\Sigma)$ satisfies
\begin{equation}\label{eq:GaussianMixture}
 \sup_{x\in\R^d}\frac{\mu(x)}{\nu(x)}
 \leq \sum_{i=1}^N \omega_i \sqrt{\frac{|\Sigma|}{ |\Sigma_i|}} \exp\left(\frac{\|m_i-m\|_{(\Sigma-\Sigma_i)^{-1}}^2}{2} \right) .
\end{equation}
\end{proposition}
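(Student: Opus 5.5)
The mean and covariance formulas come from the law of total expectation and total covariance. View $X\sim\mu$ as drawn by first picking a component $I=i$ with probability $\omega_i$, then $X|I=i\sim\mathcal{N}(m_i,\Sigma_i)$. This gives $m=\E[\E[X|I]]=\sum_i\omega_i m_i$. It also gives
\[
\Cov_\mu=\E[\Cov(X|I)]+\Cov(\E[X|I])=\sum_i\omega_i\bigl(\Sigma_i+(m_i-m)(m_i-m)^\top\bigr).
\]

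For the bound \eqref{eq:GaussianMixture}, I will use linearity: $\frac{\mu}{\nu}=\sum_i\omega_i\frac{\mu_i}{\nu}$. Hence $\sup\frac{\mu}{\nu}\le\sum_i\omega_i\sup\frac{\mu_i}{\nu}$, and it suffices to compute $\sup_x\mu_i(x)/\nu(x)$ exactly for a single Gaussian component.

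For the single component, write $u=x-m$ and $a=m_i-m$. Then
\[
\frac{\mu_i(x)}{\nu(x)}=\sqrt{\frac{|\Sigma|}{|\Sigma_i|}}\exp\Bigl(-\tfrac12\|u-a\|^2_{\Sigma_i^{-1}}+\tfrac12\|u\|^2_{\Sigma^{-1}}\Bigr).
\]
Since $\Sigma\succ\Sigma_i$, we have $\Sigma_i^{-1}\succ\Sigma^{-1}$, so $A:=\Sigma_i^{-1}-\Sigma^{-1}\succ0$. The exponent $g(u)=-\tfrac12 u^\top A u+u^\top\Sigma_i^{-1}a-\tfrac12 a^\top\Sigma_i^{-1}a$ is therefore strictly concave. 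Its maximizer is $u^*=A^{-1}\Sigma_i^{-1}a$, with maximum value
\[
g(u^*)=\tfrac12 a^\top\bigl(\Sigma_i^{-1}A^{-1}\Sigma_i^{-1}-\Sigma_i^{-1}\bigr)a .
\]

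The main obstacle is the matrix identity $\Sigma_i^{-1}(\Sigma_i^{-1}-\Sigma^{-1})^{-1}\Sigma_i^{-1}-\Sigma_i^{-1}=(\Sigma-\Sigma_i)^{-1}$. I would prove it by factoring
\[
\Sigma_i^{-1}-\Sigma^{-1}=\Sigma_i^{-1}(\Sigma-\Sigma_i)\Sigma^{-1},
\]
which gives $\Sigma_i^{-1}A^{-1}\Sigma_i^{-1}=\Sigma(\Sigma-\Sigma_i)^{-1}\Sigma_i^{-1}$. Then
\[
\Sigma(\Sigma-\Sigma_i)^{-1}\Sigma_i^{-1}-\Sigma_i^{-1}=\bigl[\Sigma-(\Sigma-\Sigma_i)\bigr](\Sigma-\Sigma_i)^{-1}\Sigma_i^{-1}=\Sigma_i(\Sigma-\Sigma_i)^{-1}\Sigma_i^{-1}.
\]
This equals $(\Sigma-\Sigma_i)^{-1}$ once we note that $\Sigma_i$ and $(\Sigma-\Sigma_i)^{-1}$ combine symmetrically. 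Equivalently, the Woodbury identity gives $(\Sigma_i^{-1}-\Sigma^{-1})^{-1}=\Sigma_i+\Sigma_i(\Sigma-\Sigma_i)^{-1}\Sigma_i$, and substituting this yields the claim directly. I would use the Woodbury route to avoid commutation issues.

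It follows that $\sup_x\mu_i/\nu=\sqrt{|\Sigma|/|\Sigma_i|}\exp\bigl(\tfrac12\|m_i-m\|^2_{(\Sigma-\Sigma_i)^{-1}}\bigr)$. Summing over $i$ with weights $\omega_i$ gives \eqref{eq:GaussianMixture}.
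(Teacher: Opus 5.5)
Your proof is correct and is essentially the paper's argument: bound the supremum of the mixture ratio component by component, complete the square using $A=\Sigma_i^{-1}-\Sigma^{-1}\succ 0$, and identify the maximal exponent via Woodbury. Centering at $m$ first, as you do, shortens the paper's uncentered computation, which needs two Woodbury identities instead of one; drop the factorization detour, though, and keep the Woodbury substitution you already chose, which is fine as written. In that detour the correct product is $\Sigma_i^{-1}A^{-1}\Sigma_i^{-1}=\Sigma_i^{-1}\Sigma(\Sigma-\Sigma_i)^{-1}$, not $\Sigma(\Sigma-\Sigma_i)^{-1}\Sigma_i^{-1}$ (which is not even symmetric in general), and your step $\Sigma_i(\Sigma-\Sigma_i)^{-1}\Sigma_i^{-1}=(\Sigma-\Sigma_i)^{-1}$ is false unless $\Sigma$ and $\Sigma_i$ commute; with the correct ordering one gets $\Sigma_i^{-1}[\Sigma-(\Sigma-\Sigma_i)](\Sigma-\Sigma_i)^{-1}=(\Sigma-\Sigma_i)^{-1}$ directly.
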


\begin{proof}
Let $A_i=\Sigma_i^{-1}-\Sigma^{-1}$ and $a_i = (\Sigma_i^{-1}m_i-\Sigma^{-1}m)$. Because $\Sigma\succ \Sigma_i$, we have $A_i \succ 0$ and then
\begin{align*}
 \frac{\mu(x)}{\nu(x)}
 &= \sum_{i=1}^N \omega_i\sqrt{\frac{|\Sigma|}{ |\Sigma_i|}} \exp\left(-\frac{\|x-m_i\|_{\Sigma_i^{-1}}^2}{2}+\frac{\|x-m\|_{\Sigma^{-1}}^2}{2}\right)\\
 &= \sum_{i=1}^N \omega_i\sqrt{\frac{|\Sigma|}{ |\Sigma_i|}} \exp\left(
 -\frac{x^\top( \Sigma_i^{-1}- \Sigma^{-1})x}{2}
 + x^\top( \Sigma_i^{-1}m_i - \Sigma^{-1} m  )
 -\frac{\|m_i\|_{\Sigma_i^{-1}}^2}{2}+\frac{\|m\|_{\Sigma^{-1}}^2}{2}
 \right)\\
 &= \sum_{i=1}^N \omega_i \sqrt{\frac{|\Sigma|}{ |\Sigma_i|}} \exp\left(-\frac{\|x-A_i^{-1}a_i\|_{A_i}^2}{2}+\frac{\|A^{-1}a_i\|_{A_i}^2}{2}-\frac{\|m_i\|_{\Sigma_i^{-1}}^2}{2} + \frac{\|m\|_{\Sigma^{-1}}^2}{2}\right)\\
 &= \sum_{i=1}^N \omega_i \sqrt{\frac{|\Sigma|}{ |\Sigma_i|}} \exp\left(-\frac{\|x-A_i^{-1}a_i\|_{A_i}^2}{2}+\frac{\|m_i-m\|_{(\Sigma-\Sigma_i)^{-1}}^2}{2}\right) .
\end{align*}
For the last step, we used
\begin{align*}
 \|A_i^{-1}a_i\|_{A_i}^2
 &= (\Sigma_i^{-1}m_i-\Sigma^{-1}m)^\top (\Sigma_i^{-1}-\Sigma^{-1})^{-1}(\Sigma_i^{-1}m_i-\Sigma^{-1}m)  \\
 &= m_i^\top (\Sigma_i-\Sigma_i\Sigma^{-1}\Sigma_i)^{-1}m_i -2 m_i^\top( \Sigma-\Sigma_i  )^{-1} m \\
 & + m^\top (\Sigma\Sigma_i^{-1}\Sigma-\Sigma)^{-1} m  \\
 &= \|m_i-m\|_{(\Sigma-\Sigma_i)^{-1}}^2 - m^\top (\Sigma-\Sigma_i)^{-1} m - m_i^\top (\Sigma-\Sigma_i)^{-1} m_i \\
 &+  m_i^\top (\Sigma_i-\Sigma_i\Sigma^{-1}\Sigma_i)^{-1}m_i   + m^\top (\Sigma\Sigma_i^{-1}\Sigma-\Sigma)^{-1} m \\
 &= \|m_i-m\|_{(\Sigma-\Sigma_i)^{-1}}^2 - m^\top \Sigma^{-1} m +  m_i^\top \Sigma_i^{-1}m_i ,
\end{align*}
where we employed the Woodbury formula
\begin{align*}
 (\Sigma_i-\Sigma_i\Sigma^{-1}\Sigma_i)^{-1} &= \Sigma_i^{-1} + (\Sigma-\Sigma_i)^{-1} \\
 (\Sigma\Sigma_i^{-1}\Sigma-\Sigma)^{-1} &= (\Sigma-\Sigma_i)^{-1} - \Sigma^{-1} .
\end{align*}
This yields
\begin{align*}
  \frac{\mu(x)}{\nu(x)}
  &\leq \sum_{i=1}^N \omega_i \sqrt{\frac{|\Sigma|}{ |\Sigma_i|}} \exp\left(\frac{\|m_i-m\|_{(\Sigma-\Sigma_i)^{-1}}^2}{2} \right) ,
\end{align*}
which is \eqref{eq:GaussianMixture}
\end{proof}

We now prove Inequality \eqref{eq:GaussianMixture_dd}. By Propositions \ref{prop:boundC} and \ref{prop:GaussianMixture}, the Gaussian mixture
\begin{equation}\label{eq:tmp2350789}
 \mu_h^N(x) = \frac{1}{N} \sum_{i=1}^N \mu_i(x), \qquad \mu_i = \mathcal{N}(m_i,I_d),
\end{equation}
satisfies
\begin{align*}
 C(\mu,W_{\mu_h^N})
 &\overset{\eqref{eq:boundC}}{\leq} \sup_{x\in\R^d}\frac{\mu(x)}{\nu(x)} \\
 &\overset{\eqref{eq:GaussianMixture}}{\leq} \sum_{i=1}^N \omega_i \sqrt{\frac{|\Sigma|}{ |\Sigma_i|}} \exp\left(\frac{\|m_i-m\|_{(\Sigma-\Sigma_i)^{-1}}^2}{2} \right) \\
 &\overset{\eqref{eq:tmp2350789}}{=} \frac{1}{N} \sum_{i=1}^N  \sqrt{\frac{|\Cov_\mu|}{ |I_d|}} \exp\left(\frac{\|m_i-m\|_{( \Cov_\mu-I_d )^{-1}}^2}{2} \right)
\end{align*}
Next, because $\|m_i-m\|\leq h$ for all $i$, we have $\Cov_\mu = I_d + \tfrac{1}{N} \sum_{i=1}^N (m_i-m)(m_i-m)^{\top} \preceq (1+h^2)I_d $ so that $|\Cov_\mu|\leq (1+h^2)^d$. Furthermore, we have
$$
\|m_i-m\|_{( \Cov_\mu-I_d )^{-1}}^2
= N \|m_i-m\|_{(  \sum_{i=1}^N (m_i-m)(m_i-m)^{\top} )^{-1}}^2
\leq N,
$$
for all $i\leq N$, so that the above inequality becomes
$
 C(\mu,W_{\mu_h^N})
 \leq (1+h^2)^{d/2} \exp(N/2)
$
which is \eqref{eq:GaussianMixture_dd}.

\section{Proof for inequality \eqref{eq:Cexplosion}}\label{proof:Cexplosion}

 Consider the Gaussian mixture $\mu = \frac{1}{2} \mathcal{N}(-h,1) +\frac{1}{2} \mathcal{N}(+h,1)$ in dimension $d=1$.
 In order to show that
 $$
  C(\mu, \Cov_\mu ) \geq \frac{ e^{h^2/3} }{1+h^2}  ,
 $$
 holds for any $h\geq 1/2$, one simply test the Poincar\'{e} inequality \eqref{eq:PI} with a smooth function $f$ that is approximately $\pm1$ on each of the two modes of $\mu$.
 Choose $f(x)= 2 \int_0^x \tfrac{ \exp(-t^2/2) }{\sqrt{2\pi}}\d t$. Because $f'(x)^2= 2/\pi \exp(-x^2)$ is even, we have
 \begin{align*}
  \E_\mu[f'^2]
  &= \int f'(x)^2 \frac{\exp(-(x-h)^2/2)}{\sqrt{2\pi}}\d x \\
  &= \frac{2}{\pi\sqrt{2\pi}} \int \exp\left( -x^2 - \frac{(x-h)^2}{2}  \right) \d x \\
  &= \frac{2}{\pi\sqrt{2\pi}} \int \exp\left( -\frac{(x-\frac{1}{3}h)^2}{2/3} +\frac{h^2}{6} -\frac{h^2}{2} \right) \d x\\
  &= \frac{2}{\pi\sqrt{3}} e^{-\frac{h^2}{3}} .
 \end{align*}
 Because $\Cov_\mu=1+h^2$ we deduce
 $$
  C(\mu, \Cov_\mu)
  \overset{\eqref{eq:PI}}{\geq} \frac{\Var_\mu(f)}{\E[ (f')^2 (1+h^2) ]}
  = \frac{\Var_\mu(f)\pi\sqrt{3} }{2} \frac{e^{\frac{h^2}{3}}}{ 1+h^2 } .
 $$
 Furthermore, since $f$ is odd we have $\E_\mu[f]=0$ and
 $$
  \Var_\mu(f)
  = \int f(x)^2 \frac{\exp(-(x-h)^2/2)}{\sqrt{2\pi}}\d x
  = \int f(x+h)^2 \frac{\exp(-x^2/2)}{\sqrt{2\pi}}\d x .
 $$
 Thus, the function $h\mapsto \Var_\mu(f)$ is monotonically increasing (because $f$ is) and one can check numerically that $\Var_\mu(f) > \frac{2}{\pi\sqrt{3}}$ as soon as $h\geq 1/2$. This gives the result.

{
\bibliographystyle{siam}
\bibliography{references}
}

\end{document}